\documentclass{article}
\usepackage{amsfonts,amsmath,amssymb,amsthm,bm}
\usepackage{mathtools}
\usepackage{ mathrsfs }

\newcommand{\defn}[1]{\textbf{#1}}

\newcommand{\defeq}{\mathrel{\stackrel{\textnormal{\tiny def}}{=}}}

\def\eqref#1{equation~\ref{#1}}

\def\1{\bm{1}}

\DeclareMathAlphabet{\mathsfit}{\encodingdefault}{\sfdefault}{m}{sl}
\SetMathAlphabet{\mathsfit}{bold}{\encodingdefault}{\sfdefault}{bx}{n}

\newtheorem{theorem}{Theorem}

\newcommand{\ent}{\mathcal H}

\usepackage{iclr2027_conference}
\makeatletter\providecommand\new@fontshape{}\makeatother
\usepackage{gb4e}
\noautomath
\newcommand{\zh}[1]{\begin{CJK*}{UTF8}{gbsn}#1\end{CJK*}}
\usepackage{times}
\usepackage{latexsym}
\usepackage{CJKutf8}

\usepackage[T1]{fontenc}

\usepackage[utf8]{inputenc}

\usepackage{microtype}

\usepackage{inconsolata}

\usepackage{graphicx}
\usepackage{wrapfig}
\usepackage{hyperref}
\usepackage{bbold}
\usepackage{booktabs}
\usepackage{enumitem}
\usepackage{cleveref}
\crefname{xnumi}{example}{examples}
\Crefname{xnumi}{Example}{Examples}
\crefname{xnumii}{example}{examples}
\Crefname{xnumii}{Example}{Examples}
\creflabelformat{xnumi}{(#2#1#3)}
\creflabelformat{xnumii}{(#2#1#3)}
\crefformat{section}{\S#2#1#3}
\Crefformat{section}{\S#2#1#3}
\crefformat{subsection}{\S#2#1#3}
\Crefformat{subsection}{\S#2#1#3}
\crefformat{subsubsection}{\S#2#1#3}
\Crefformat{subsubsection}{\S#2#1#3}
\crefrangeformat{section}{\S\S#3#1#4--#5#2#6}
\Crefrangeformat{section}{\S\S#3#1#4--#5#2#6}
\usepackage{ amssymb }
\usepackage{forest}
\usepackage{randtext}

\usepackage{todonotes}
\usepackage{amsmath,xcolor}
\definecolor{mintgreen}{RGB}{152, 255, 152}
\makeatletter
\newcommand*\iftodonotes{\if@todonotes@disabled\expandafter\@secondoftwo\else\expandafter\@firstoftwo\fi}  %
\makeatother

\definecolor{claudecol}{RGB}{182, 90, 50}

\definecolor{linkpurple}{HTML}{551A8B}
\hypersetup{colorlinks=true, allcolors=linkpurple}

\setcitestyle{authoryear,open={(},close={)}} 
\newcommand{\citeposs}[1]{\citeauthor{#1}'s \citeyearpar{#1}}

\renewcommand{\ent}{\mathrm{H}}
\newcommand{\MI}{\mathrm{I}}
\newcommand{\Prob}{\ensuremath{\mathbb{P}}}

\newcommand{\metric}{\ensuremath{d}}

\definecolor{colM}{HTML}{1F77B4}
\definecolor{colJ}{HTML}{3446eb}
\definecolor{colC}{HTML}{D62728}
\definecolor{colU}{HTML}{2CA02C}
\definecolor{colZ}{HTML}{9467BD}
\newcommand{\rvM}{\ensuremath{{\color{colM}M}}}
\newcommand{\rvJ}{\ensuremath{{\color{colJ}J}_\epsilon}}
\newcommand{\rvC}{\ensuremath{{\color{colC}C}}}
\newcommand{\rvU}{\ensuremath{{\color{colU}U}}}
\newcommand{\g}{\ensuremath{{\color{colZ}g}}}
\newcommand{\f}{\ensuremath{{\color{colM}f}}}

\newcommand{\setM}{\ensuremath{{\color{colM}\mathcal{M}}}}
\newcommand{\setC}{\ensuremath{{\color{colC}\mathcal{C}}}}
\newcommand{\setU}{\ensuremath{{\color{colU}\mathcal{U}}}}
\newcommand{\insM}{\ensuremath{{\color{colM}m}}}
\newcommand{\insC}{\ensuremath{{\color{colC}c}}}
\newcommand{\insU}{\ensuremath{{\color{colU}u}}}
\newcommand{\rvZ}{\ensuremath{{\color{colZ}Z}}}
\newcommand{\setZ}{\ensuremath{{\color{colZ}\mathcal{Z}}}}
\newcommand{\insZ}{\ensuremath{{\color{colZ}z}}}

\usepackage{algorithm}
\usepackage{algpseudocode}
\usepackage{tikz}
\usetikzlibrary{shapes, positioning}
\usepackage{tikz-3dplot}
\usepackage{pgfplots}
\pgfplotsset{compat=1.18}
\usetikzlibrary{arrows.meta}
\usepackage{placeins}
\title{A Formal Limitation on Learning Human Language From Textual Corpora}

\author{Emily Cheng \\
  Universitat Pompeu Fabra \\
  \href{mailto:emilyshana.cheng@upf.edu}{\texttt{emilyshana.cheng@upf.edu}} \\\And
  Ryan Cotterell \\
  ETH Z\"urich \\
  \href{mailto:ryan.cotterell@inf.ethz.ch}{\texttt{ryan.cotterell@inf.ethz.ch}} \\}

\iclrfinalcopy 
\begin{document}
\maketitle
\begin{abstract}
Can a listener recover what a speaker means from the form of an utterance alone? We answer this question information-theoretically, and for a listener given by \emph{any} featurizer of text, including the hidden states of contemporary large language models. Modeling language use as a joint distribution over meanings, contexts, and utterances, we derive upper bounds on the probability that a decoder recovers a speaker's intended meaning from a representation of the utterance. The bounds are governed by the uncertainty that form leaves about meaning, which splits into an irreducible part and a part that only (extralinguistic) context, but never the utterance alone, can resolve. Because these quantities are intrinsic to language, no representation, however much text or supervision produced it, can surpass them; the bounds hold whether the space of meanings is discrete or continuous. Experiments on artificial languages, Mandarin zero-pronoun resolution, and color reference provide empirical evidence in support of the theory.
\end{abstract}

\makeatletter
\let\oldaddcontentsline\addcontentsline
\renewcommand{\addcontentsline}[3]{}
\makeatother

\section{Introduction}
How much about natural language can be learned from text alone? It is an old question, and for over half a century the classical answer, from formal language learning theory, was largely negative. 
Famously, \citet{gold1967language} showed that a learner presented with an unbounded stream of grammatical strings, but never told which strings are \emph{un}grammatical, cannot in general be guaranteed to converge on the target grammar. 
\citet{angluin1980inductive} then tightened the analysis, determining exactly which classes of languages \emph{are} identifiable from positive data---a condition that the expressive grammars posited for natural language fail to meet.
Read literally, these results cast text, an unlabeled stream of positive examples, as a fundamentally impoverished signal from which to acquire a language.

Probability appeared to offer a way out. Gold's setting demands the exact identification of 
the entirety of 
a boolean grammar---a string is either in the language or not.
However, relaxing grammar induction to estimating a probability distribution over strings makes learning from positive data conceivable by trading grammaticality for statistical structure. This reorientation has deep roots in \citeposs{harris1954distributional} distributional hypothesis, that a linguistic unit is characterized by the contexts in which it occurs \citep{Wittgenstein1953-WITPI-4,firth1957synopsis}. In modern natural language processing, the distributional hypothesis is operationalized through distributed representations \citep{hinton1986learning,bengio2003neural,mikolov2013efficient,levy2014neural}, in which words and their contexts are encoded as real-valued vectors whose geometry reflects distributional similarity. Language models built on such representations estimate distributions over strings directly, and large language models (LLMs) trained on trillions of tokens of text continue in this tradition \citep[][\emph{inter alia}]{grattafiori2024llama3herdmodels,groeneveld2024olmoacceleratingsciencelanguage,yang2025qwen3technicalreport}.
Distributional methods of this kind show that a remarkable amount about linguistic form, and even about meaning \citep{Boleda_2020}, can be recovered from the statistical structure of text alone.

In the probabilistic setting, not everyone is convinced that form suffices. \citet{bender-koller-2020-climbing} pushed against this optimism with an influential thought experiment: an octopus that eavesdrops only on the linguistic forms exchanged between two speakers across an undersea cable, never observing the world they share, cannot, Bender and Koller argue, learn meaning, which is grounded in communicative intent and the extralinguistic world \citep{harnad1990symbol}. In some respects this skepticism has proven prescient, as the most capable LLMs are no longer trained on text alone. A post-training stage now routinely augments the maximum-likelihood estimation of a text distribution with extralinguistic supervision; most prominently, reinforcement learning from human feedback (RLHF) optimizes models against human preference judgments \citep{christiano2017deep,ouyang2022training}, injecting a signal about communicative success that raw corpora do not contain. As the frontier continues to move beyond the purely textual, the original question grows sharper rather than moot. We pose it in its probabilistic form: given the distribution of language use, what are the in-principle limits on what \emph{any} system---however much text it observes---can infer about language, and in particular about the \emph{meanings} that speakers intend, from form alone?

Before answering this question, we first remark that \emph{meaning} admits many conceptualizations. For example, meaning can be understood in terms of \emph{reference} (the entities that the utterance picks out in the world); or \emph{sense} (the ``thought'' an utterance evokes, independent of reference) \citep{Frege1892-FREBSU}. In addition, meaning can be analyzed \emph{conventionally}---on utterances in-and-of-themselves---or \emph{pragmatically}, taking extralinguistic context into account \citep{Grice_1957,Sperber1995-SPER}. \citet{Wittgenstein1953-WITPI-4} provides an example of this distinction, in which a builder shouts ``Slab!'' to his coworkers on a construction site: the \emph{conventional} meaning of the utterance is (simply) a slab. However, in context, the speaker's intended (or \emph{pragmatic}) meaning is instead ``Bring me the stone slab.'' Human language use is fundamentally pragmatic \citep{Yule_1996}. Indeed, every (human-produced) natural language utterance---including those in an LLM's training data---was said in some social, physical, and cultural context in which a speaker \emph{intended a meaning} and, informed by context, produced an utterance from which a listener could recover that meaning \citep{Grice1975-GRILAC-6}. Because speaker intent is so key to communication between a speaker and a listener, then critically, speaker intents also underlie the \emph{distribution} of linguistic forms at scale, which aggregates over these single instances of communication. Despite how foundational speaker intent is to language use, whether systems trained on the distribution of forms alone can represent intended meanings has been understudied in comparison to (conventional) referential or sentential notions of meaning \citep{Merrill_Goldberg_Schwartz_Smith_2021,mandelkern-linzen-2024-language,Baggio_Murphy_2026}. For this reason, we privilege \emph{intended meaning} in our analysis of meaning representation. Our analysis reveals that systems trained on forms (divorced from context) learn provably impoverished representations of intended meaning.

\section{A Model of Communication}
\label{sec:setup}

Our theoretical results in \Cref{sec:theory} rest on a model of (human) communication in context. 
In this section, we present the model, which is based loosely on Shannon's communication channel \citep{shannon1948mathematical} and formalizes communication between a speaker and a listener.

\paragraph{Notation.}
We denote random variables in uppercase ($X$), their realizations in lowercase ($x$), and their supports in calligraphic type ($\mathcal{X}$); thus $x \in \mathcal{X}$ is one particular value among the set $\mathcal{X}$ of possible values.
Our model of communication involves the following random variables:
\begin{itemize}[leftmargin=*]
    \item $\rvM$: the \defn{meaning} the speaker intends. Its support $\setM$ may be finite, e.g., a closed set of pronouns, or a continuous space, e.g., a color space. To measure distances, we equip $\setM$ with a metric $\metric$ (\Cref{sec:continuous}).\looseness=-1
    \item $\rvC$: the \defn{context} in which communication takes place, encompassing everything other than the current utterance, both extralinguistic, e.g., speaker identities, and linguistic, e.g., the prior conversation history. We assume the context is shared by speaker and listener.\looseness=-1
    \item $\rvU$: the \defn{utterance}: the linguistic form the speaker produces to convey $\rvM$ in context $\rvC$.\looseness=-1
    \item $\widehat{\rvM}$: the listener's \defn{inferred meaning}. 
    $\widehat{\rvM}$ shares the support $\setM$ with $\rvM$, and communication succeeds to the extent that $\widehat{\rvM}$ matches $\rvM$.\looseness=-1
    \item $\rvZ$: a \defn{representation} of the utterance, e.g., the activations of an LLM run on $\rvU$. Representations are produced by a featurizer $\g \colon \setU \to \setZ$ and mapped back to meanings by a decoder $\f \colon \setZ \to \setM$ (\Cref{sec:theory}).\looseness=-1
\end{itemize}
We write $\Prob$ for a probability measure over the product space of all random variables.
We write $\ent(\rvM)$ for the Shannon entropy of $\rvM$, $\ent(\rvM \mid \rvU)$ for the conditional entropy of $\rvM$ given $\rvU$, and $\MI(\rvM;\rvU) = \ent(\rvM) - \ent(\rvM \mid \rvU)$ for their mutual information; conditional variants such as $\MI(\rvM;\rvC \mid \rvU)$ are defined analogously.
Finally, $\perp$ denotes (conditional) independence.

\begin{figure}
    \centering
    \begin{minipage}[c]{0.48\linewidth}
        \centering
        \resizebox{\linewidth}{!}{%
\providecommand{\rvM}{M}\providecommand{\rvC}{C}\providecommand{\rvU}{U}
\begin{tikzpicture}[
    node distance=2.2cm,
    every node/.style={draw, circle, minimum size=1cm},
    >=stealth
]

\node (M) {$\rvM$};
\node (U) [right of=M, xshift=-0.3cm] {$\rvU$};
\node (Mhat) [right of=U, xshift=1.5cm] {$\widehat{\rvM}$};

\node (C) [above of=U, yshift=-0.5cm] {$\rvC$};

\draw[->, semithick] (C) -- (M);
\draw[->, semithick] (M) -- (U);
\draw[->, semithick] (C) -- (U);

\draw[->, semithick] (U) -- (Mhat);
\draw[->, semithick] (C) -- (Mhat);


\draw[dashed, rounded corners]
  ($(M.north west)+(-0.4,0.6)$) rectangle
  ($(M.south east)+(2.3,-0.6)$);
\node[draw=none] at ($(M.south)+(-0.1,-0.25)$) {\small speaker};

\draw[dashed, rounded corners]
  ($(Mhat.north west)+(-0.7,0.6)$) rectangle
  ($(Mhat.south east)+(0.7,-0.6)$);

\node[draw=none] at ($(Mhat.south)+(0.35,-0.25)$) {\small listener};
\end{tikzpicture}
        }
    \end{minipage}
    \hfill
    \begin{minipage}[c]{0.48\linewidth}
        \caption{\small \textbf{Communication model.}
        During communication, a context $\rvC$ influences a speaker's intended
        meaning, given by random variable $\rvM$, which the speaker encodes
        into an utterance $\rvU$ given that same context. Under the same
        context $\rvC$, the listener decodes the utterance $\rvU$ into their
        inferred meaning $\widehat{\rvM}$. Communication is
        considered successful if $\rvM \approx \widehat{\rvM}$.}
        \label{fig:graphical_model}
    \end{minipage}
\end{figure}

\paragraph{The communication model.}
The graphical model in \Cref{fig:graphical_model} wires together the variables we just defined. In particular, \Cref{fig:graphical_model} realizes a dyadic model of communication based on Shannon's communication channel \citep{shannon1948mathematical}. During a single dialogue turn, a speaker (left dashed box) intends a meaning $\rvM$ in context and produces an utterance $\rvU$ modulated by the context, where the context can contain both \emph{extralinguistic} factors, such as speaker identities, and \emph{linguistic} ones, such as the prior conversation history \citep{Grice1975-GRILAC-6,Yule_1996}. 
Given the same context as the speaker, the listener (right dashed box) reconstructs the meaning of the utterance.

We model language use in a speech community as made up of many such single instances of dyadic communication, each between a single speaker and listener; language use can therefore be described by a \emph{distribution} over utterances, contexts, and meanings produced by various speaker--listener pairs \citep{rsa_frank_goodman}. We are interested in information-theoretic properties of the entire system, so we abstract away from single communicative instances, modeling their components as random variables and communication as a generative process over them.
The graphical model depicted in \Cref{fig:graphical_model} encodes the following relationships in its edges:
\begin{itemize}[leftmargin=*]
    \item $\rvC \to \rvM$: meanings arise \emph{in} context; what a speaker intends to convey may depend on the situation, including the topic under discussion, the interlocutors, and their goals.
    \item $\rvM \to \rvU$ and $\rvC \to \rvU$: the speaker chooses an utterance form given both their intended meaning and the context, e.g., leaving unsaid what the context already makes clear.
    \item $\rvU \to \widehat{\rvM}$ and $\rvC \to \widehat{\rvM}$: the listener forms their inference from the utterance they perceive and the context they share with the speaker.
\end{itemize}

The model encodes two key assumptions. First, we assume that the same context is available to the speaker and listener---in practice, this is often not the case: when the interlocutors' contexts differ, they may undergo a process of pragmatic repair
to establish a common ground \citep{Clark_Brennan_1991}. 
Indeed, we do not analyze the multi-turn case at all, focusing instead on single-turn one-way communication from speaker to listener. Second, the linguistic channel may contain noise. For simplicity, our exposition assumes the channel is noiseless; however, our main results directly apply to the noisy channel setting, which we will revisit briefly in \Cref{sec:theory}.


\section{Bounding meaning inference from text-based representations}
\label{sec:theory}
Now given our model of communication (\Cref{fig:graphical_model}), we set up the problem of intended meaning representation in a listener who only accesses utterance forms. While our exposition focuses on language models---importantly, LLMs that do not observe context at inference time---the following applies to all systems that only observe ungrounded forms. To make precise our notion of meaning representation, we first address how we operationalize meanings, followed by representation.

We model meanings as a metric space $(\setM, \metric)$ \citep{Gardenfors_2000}. Our theory handles both countable meaning sets $\setM$, e.g., the closed set of pronouns, or uncountable meaning sets, such as continuous color spaces or the natural numbers. Meanings $\setM$ can, moreover, be bounded, as in the case of colors (below, \textbf{(a)}), which are constrained by human visual perception; or unbounded, as in the case of the natural numbers (below, \textbf{(b)}). 

In \textbf{(b)}, even though the set of natural numbers is unbounded, humans rarely communicate about extremely large
numbers. Instead, cross-linguistically, the likelihood of communicating about a natural number, described by $\Prob(\rvM)$ in our notation, empirically falls off according to an inverse-square law \citep{Dehaene_Mehler_1992,Piantadosi_2014}. Then, critically, $\rvM$
has \emph{finite entropy}, i.e., $\ent(\rvM) < \infty$. We will assume that in language use, meaning distributions generally behave like the natural numbers. In particular, we assume a weaker version of the statement ``$\ent(\rvM) < \infty$'', formalized later in \Cref{sec:continuous}.

\begin{wrapfigure}{l}{0.32\linewidth}
    \centering
    \vspace{-1ex}

    \includegraphics[
        trim={3mm 10mm 3mm 30mm},
        clip,
        width=0.7\linewidth
    ]{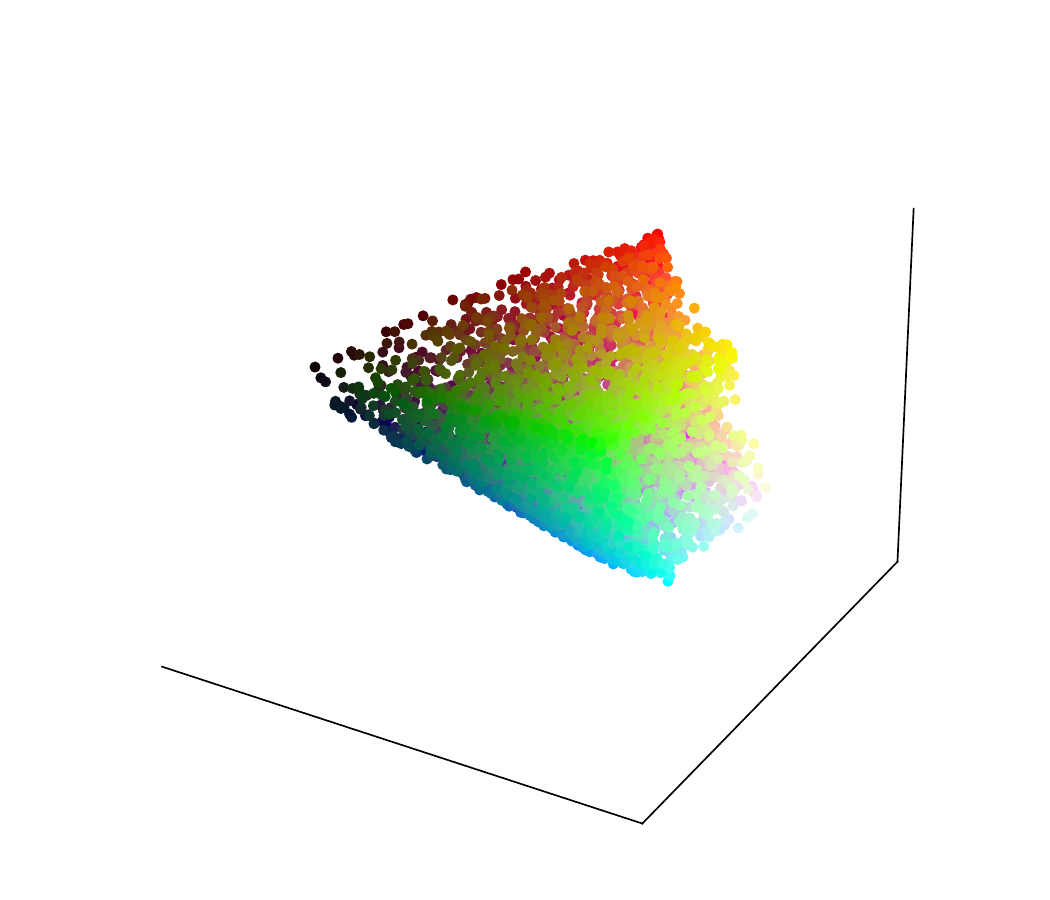}

    \vspace{0.25ex}
    \footnotesize
    (a) Bounded $\setM$ (colors)

    \vspace{4ex}

    \begin{tikzpicture}[
    >=Latex,
    every node/.style={font=\small}
]

\draw[-{Latex[length=3mm,width=2mm]}, line width=0.8pt]
    (1,0) -- (3.7,0);

\foreach \x in {1,...,3}{
    \fill (\x,0) circle (1.2pt);
    \node[below=2pt] at (\x,0) {$\x$};
}

\end{tikzpicture}

    \vspace{0.25ex}
    \footnotesize
    (b) Unbounded $\setM = \mathbb N$
\end{wrapfigure}
            
\paragraph{Learning to represent meaning.}
What does it mean for a system like an LLM to (learn to) represent speaker meaning? 
Language models take linguistic forms as input and are trained to output a distribution over the next token. As a byproduct of this training, these models learn a set of feature maps on linguistic forms, given by the models' internal activations under a textual input. 
In our paper's notation, LLM activations implement a function $\g \colon \setU \to \setZ$, where $\g$ maps the space of \emph{forms} ($\setU$) to a \emph{representation space} ($\setZ$, typically $\mathbb R^d$). We propose that a necessary condition for LLM activations to \emph{represent} speaker meaning is that the representation space $\setZ$ permits successful inference of the original meaning. That is, there should exist some $\f \colon \setZ \to \setM$ that approximately maps representations in $\setZ$ back to true meanings in $\setM$.

The problem of learning a good meaning representation can be distilled into learning a $\g: \setU \to \setZ$ such that approximate meaning recovery from $\setZ$ is possible: that is, such that there exists $\f :\setZ \to \setM$ where $\metric(\f(\g(\insU)), \insM) \leq \epsilon$. We consider two cases in particular: $\epsilon=0$, corresponding to exact recovery, or \emph{classification}, and $\epsilon > 0$, corresponding to $\epsilon$-accurate recovery in a \emph{regression} setting; the former is only applicable for finite $\setM$. In the following sections, for each case, we will upper-bound the probability of successful meaning recovery from a representation of the utterance (\Cref{thm:classification,coro:regression}).
We emphasize that our definitions (and, by extension, our theoretical results) apply to \emph{any} text featurizer $\g$, including the internal activations of a perfect learner trained on infinite data. In short, our results put forth a theoretical limit on meaning representation from text. Relevant to a machine learning audience, this limit also constrains the extent to which meanings can be \emph{learned} from text: what one cannot represent, one cannot learn. 

\paragraph{Meaning in utterance vs.~context} What aspects of intended meaning can or cannot be represented given ungrounded utterances? 
To build intuition about meaning representation in a purely text-based system, first note that both \underline{context} and \underline{utterance} contribute to meaning inference in communication. The listener's ability to infer meaning is described by the mutual information $\MI(\rvM;\rvU;\rvC)$ between meanings $\rvM$, utterances $\rvU$, and contexts $\rvC$. $\MI(\rvM;\rvU;\rvC)$ can be decomposed as
\begin{equation}
\label{eq:mi_breakdown}
\MI(\rvM;\rvU;\rvC) = \MI(\rvM;\rvU) + \MI(\rvM;\rvC\mid \rvU),
\end{equation}
where from left to right, $\MI(\rvM;\rvU;\rvC)$ denotes the amount of information that context and utterance jointly encode about meaning; $\MI(\rvM;\rvU)$ denotes what the utterance encodes about meaning; and $\MI(\rvM;\rvC\mid \rvU)$ what context encodes about meaning that the utterance does not. 
Here, if $\MI(\rvM;\rvC\mid \rvU) \approx 0$, then the context and utterance are largely redundant and the utterance $\rvU$ accounts for nearly all of what the listener can decode about $\rvM$. On the other hand, if $\MI(\rvM;\rvC\mid \rvU) \approx 
\MI(\rvM; \rvC, \rvU)$, then context alone (rather than utterance) carries most of the speaker's intended meaning during communication, making the utterance largely defunct ($\MI(\rvM;\rvU)\approx 0$). 
Next, in \Cref{thm:classification,coro:regression}, we will see how these terms in \Cref{eq:mi_breakdown} govern how well \emph{any} representation of a text utterance, for instance, its activations under an LLM, represent speaker meaning. 

\subsection{Exact recovery for finite meaning spaces}
\label{sec:discrete_theory}

We consider a classification task that aims to recover the true meaning $\rvM = \widehat \rvM$ from a representation of the utterance. 
We assume the random variable $\rvM$ over meanings has discrete or \emph{countable support} $\setM$ (we analyze the uncountable case in the next section).  The goal in this task is to find some classifier $\f: \setZ \to \setM$ that maps representations $Z$ to their correct meanings out of all $|\setM|$ possibilities. Critically, as mentioned above, we assume $\ent(\rvM)<\infty$ is finite. Then, using \Cref{eq:mi_breakdown}, we can upper-bound the probability that $\f $ is correct:

\begin{theorem}
\label{thm:classification}
    Let $\g \colon \setU \to \setZ$ be a map from utterances ($\setU$) to representations ($\setZ$). Abbreviate the probability that $\f$ is correct as $p_e := \Prob[\f(\g (\rvU)) = \rvM]$. Then,
        \begin{align}
        \sup_{\f \in \setM^{\setZ}} p_e &\leq \frac{\MI(\rvM ; \rvU)+\ent_2(p_e)}{\ent(\rvM)} 
        =
        \frac{\overbrace{\MI(\rvM ; \rvU,\rvC)}^{\text{overall decodability}} - \overbrace{\MI(\rvM;\rvC\mid \rvU)}^{\text{meaning from context only}}+\ent_2(p_e)}{\ent(\rvM)}. \nonumber 
    \end{align}
\end{theorem}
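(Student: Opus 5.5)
The plan is a Fano-style argument run with $\ent(\rvM)$ in place of $\log|\setM|$, followed by the data-processing inequality and the chain-rule identity of \Cref{eq:mi_breakdown}. Fix an arbitrary decoder $\f \in \setM^{\setZ}$ and write $\rvZ = \g(\rvU)$, $\widehat{\rvM} = \f(\rvZ)$. Let $E = \1[\widehat{\rvM} = \rvM]$, so that $\Prob(E=1) = p_e$ and $\ent(E) = \ent_2(p_e)$. Since $\g$ and $\f$ are deterministic, $\rvM \to \rvU \to \rvZ \to \widehat{\rvM}$ is a Markov chain. Data processing then gives $\MI(\rvM;\rvU) \ge \MI(\rvM;\rvZ)$, so it suffices to prove
\begin{equation*}
p_e\,\ent(\rvM) \le \MI(\rvM;\rvZ) + \ent_2(p_e).
\end{equation*}
Dividing by $\ent(\rvM)$, which is finite by assumption and positive in the nontrivial case, and taking the supremum over $\f$ yields the first inequality.

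The core computation expands $\ent(\rvM, E \mid \rvZ)$ in two ways. Because $E$ is a function of $(\rvM,\rvZ)$, we have $\ent(\rvM\mid\rvZ) = \ent(E\mid\rvZ) + \ent(\rvM\mid\rvZ,E)$. We then bound $\ent(E\mid\rvZ) \le \ent_2(p_e)$ and split the second term by the value of $E$:
\begin{equation*}
\ent(\rvM\mid\rvZ,E) = p_e\,\ent(\rvM\mid\rvZ,E=1) + (1-p_e)\,\ent(\rvM\mid\rvZ,E=0).
\end{equation*}
On $\{E=1\}$ we have $\rvM = \f(\rvZ)$, so the first term vanishes. This leaves
\begin{equation*}
\ent(\rvM\mid\rvZ) \le \ent_2(p_e) + (1-p_e)\,\ent(\rvM\mid\rvZ,E=0).
\end{equation*}
If the error-branch term is at most $\ent(\rvM)$, then $\ent(\rvM) - \MI(\rvM;\rvZ) \le \ent_2(p_e) + (1-p_e)\ent(\rvM)$. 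Rearranging gives exactly $p_e\,\ent(\rvM) \le \MI(\rvM;\rvZ) + \ent_2(p_e)$.

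The main obstacle is this step: showing $\ent(\rvM\mid\rvZ,E=0) \le \ent(\rvM)$, the entropic replacement for Fano's $\log(|\setM|-1)$. First apply "conditioning reduces entropy" inside the event $\{E=0\}$ to get $\ent(\rvM\mid\rvZ,E=0) \le \ent(\rvM\mid E=0)$. It then remains to show $\ent(\rvM\mid E=0)\le \ent(\rvM)$. The only general tool available is $\ent(\rvM) \ge \ent(\rvM\mid E) \ge (1-p_e)\,\ent(\rvM\mid E=0)$. This gives a weaker bound than needed: substituting it yields only $\MI(\rvM;\rvZ) \ge p_e\,\ent(\rvM\mid E=1) - \ent_2(p_e)$. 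Closing the gap therefore requires that conditioning on a decoding error does not make $\rvM$ more uncertain than it is marginally. This does not hold for arbitrary laws. A counterexample is a very peaked $\rvM$ with a long flat tail, decoded by always guessing the mode: here $p_e$ is close to $1$ and $\MI(\rvM;\rvZ)=0$, yet $\ent(\rvM)$ can be made arbitrarily large. So this is the point where the argument must lean on the paper's modeling assumption that meaning distributions are well behaved. I would state explicitly that $\ent(\rvM\mid E=0)\le\ent(\rvM)$ is being assumed at this step, and flag it as the hypothesis under which the stated bound holds.

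Given the first inequality, the equality in the statement is pure bookkeeping. Apply the chain rule $\MI(\rvM;\rvU,\rvC) = \MI(\rvM;\rvU) + \MI(\rvM;\rvC\mid\rvU)$, i.e.\ \Cref{eq:mi_breakdown}, and substitute $\MI(\rvM;\rvU) = \MI(\rvM;\rvU,\rvC) - \MI(\rvM;\rvC\mid\rvU)$ into the numerator. Nothing about $\f$ or $\g$ enters this step. The bound is uniform in $\g$, since only data processing through $\rvU$ was used.
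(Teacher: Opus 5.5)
Your diagnosis is correct, and the gap you isolate lies in the statement itself rather than in your route. The paper's proof follows the same chain as yours: Fano, then $\ent(\rvM\mid\rvZ)\ge\ent(\rvM\mid\rvU)$, then the chain rule. But it starts from ``Fano's inequality'' in the form $\ent(\rvM\mid\rvZ)\le\ent_2(1-p_e)+(1-p_e)\,\ent(\rvM)$, i.e., with $\ent(\rvM)$ substituted for the classical $\log(|\setM|-1)$. That substitution is exactly your unproven step $\ent(\rvM\mid\rvZ,E=0)\le\ent(\rvM)$, and it is false in general. Your peaked-plus-tail example refutes the theorem, not just your proof. None of the paper's stated hypotheses rescues it, neither finite entropy nor the valid-language condition $\ent(\rvM\mid\rvU,\rvC)=0$.

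Concretely, let $\rvC$ take three values with probabilities $(0.9,0.05,0.05)$, set $\rvM=\rvC$, and let $\rvU$ be constant. Then $\MI(\rvM;\rvU)=0$, and the decoder that always outputs the most likely meaning has $p_e=0.9$. Yet the right-hand side equals $\ent_2(0.9)/\ent(\rvM)\approx 0.469/0.569\approx 0.82$. The weaker bound $(\MI(\rvM;\rvU)+1)/\ent(\rvM)$, which is what the paper's proof actually derives, also fails for your tail example. Put mass $1/2$ on one meaning and spread $1/2$ uniformly over $2^{20}$ others: then $\ent(\rvM)=11$ bits and $p_e=1/2$, against a bound of $1/11$.

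So you should not attribute $\ent(\rvM\mid E=0)\le\ent(\rvM)$ to the paper's ``well-behaved meanings'' assumption; it is a genuinely extra hypothesis. Because $E$ depends on $\f$, it must hold for every decoder in the supremum. By contraposition it fails for every decoder that violates the bound, so adopting it comes close to assuming the conclusion.

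A clean repair keeps your skeleton but changes the denominator. Apply the data-processing inequality for relative entropy to the indicator $E=\1[\f(\rvZ)=\rvM]$, comparing the joint law of $(\rvM,\rvZ)$ with the product of its marginals. Under the product law $\rvM$ is independent of $\rvZ$, so $E=1$ has some probability $q\le p_{\max}:=\max_{\insM}\Prob(\rvM=\insM)$. Hence $\MI(\rvM;\rvZ)\ge p_e\log(p_e/q)+(1-p_e)\log\frac{1-p_e}{1-q}\ge p_e\log(1/p_{\max})-\ent_2(p_e)$. Combined with $\MI(\rvM;\rvZ)\le\MI(\rvM;\rvU)$, this yields the stated inequality with the min-entropy $-\log p_{\max}$ in the denominator, which is at most $\ent(\rvM)$. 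It coincides with the paper's version exactly when $\rvM$ is uniform on its support; that holds for the discrete artificial languages but not for the skewed pronoun distribution. For finite $\setM$ you could instead use classical Fano, at the cost of a denominator $\log(|\setM|-1)$ and an extra $\log(|\setM|-1)-\ent(\rvM)$ in the numerator. Your handling of the second equality via the chain rule is fine.
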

\begin{proof}
    See \Cref{app:more_theory}.
\end{proof}


\noindent \Cref{thm:classification} bounds the probability of correct classification, $p_e \defeq \Prob[\f (\g (\rvU)) = \rvM]$, when the meaning space $\setM$ is countable or categorical.\footnote{If the linguistic channel is noisy, then the inequality is strict.} While such categorical meaning spaces are common (e.g., kinship terms \citep{Kemp_Regier_2012}), many meaning spaces of interest may be \emph{uncountably infinite}---for instance, the continuous space of colors. In the next section, we derive an analogous result for a \emph{regression} setting on potentially uncountable spaces.

\subsection{Approximate recovery for infinite meaning spaces}
\label{sec:continuous}

\Cref{thm:classification} naturally extends to a regression setting over a potentially uncountable meaning space. Continuous meaning spaces fall into this category. Here, instead of considering the probability that $\f $ exactly matches the target class, under a regression objective we let $p_e$ be the likelihood that $\f (\rvZ)$'s meaning estimate is within $\epsilon$ of the ground truth, or $\metric(\widehat \rvM, \rvM) \leq \epsilon$. 

Reformulating the problem to the setting of $\epsilon$-accurate recovery requires some extra considerations. Notably, we can no longer directly use Fano's inequality, which applies to random variables ($\rvM, \widehat \rvM$) supported on \emph{finite} alphabets. The workaround, then, is to first construct an \emph{auxiliary classification problem} where Fano's inequality can be applied, such that solving $\epsilon$-accurate recovery entails solving the auxiliary problem. Formally, let the binary event $A$ be the event that $\epsilon$-accurate recovery holds, i.e., $A = \mathbb 1[\metric(\widehat \rvM, \rvM) \leq \epsilon]$. The goal is to construct the auxiliary event $B$ such that $A\Rightarrow B$; this  would further imply $\Prob(A) \leq \Prob(B)$. Critically, $B$ will be the event that some classification problem is solved, where that classification problem is derived from the original regression. Then, we will upper-bound $\Prob(B)$ using Fano's inequality; doing so upper bounds $\Prob(A) = \Prob(\metric(\widehat \rvM, \rvM) \leq \epsilon)$, or the likelihood of $\epsilon$-accurate recovery, by extension.

We now construct the auxiliary problem. Let $\{m_1, \dots, m_{N(\epsilon)}\}$ be a maximal $2\epsilon$-packing of $\setM$. That is, $\metric(\insM_i, \insM_j) \geq 2\epsilon$ for all $i\neq j \in 1\dots N(\epsilon)$, where $N(\epsilon)$ is the packing number. The auxiliary problem is to correctly classify the packing points as follows. For each packing point $\insM_n \in \{\insM_n\}_{n=1}^{N(\epsilon)}$, let $\setU(\insM_n)$ be all utterances in $\setU$ that refer to that packing point. We want to ensure $\f(\g(\insU)) = \insM_n$ for each $\insU\in \setU(\insM_n)$. Then, the critical step is to set up a discrete random variable $\rvJ$ based on this packing point classification problem, where $\rvJ$ eventually enters into Fano's inequality. 

To set up the distributions of these random variables, we first make use of a ``quantizer'' function $q: \setM \to \{\insM_n\}_{n=1}^{N(\epsilon)}$ that maps points in $\setM$ to the nearest packing point in $\{\insM_n\}_{n=1}^{N(\epsilon)}$. We then define $\rvJ \defeq q(\rvM)$ to be the random variable over the packing points $\{\insM_n\}_{n=1}^{N(\epsilon)}$.

Now, let event $B$ be the event that packing points are correctly classified given utterance, i.e., $B = \mathbb 1[\widehat \rvJ = \rvJ]$, where $\widehat \rvJ$ is the estimated packing point. By definition of $2\epsilon$-packing, one can construct $\widehat \rvJ$ directly by selecting the closest packing point to the output of decoder $\f$ in the $\epsilon$-accurate recovery setting of $A$. This fact means that, as desired, event $A\Rightarrow B$; by extension, $\Prob(A) \leq \Prob(B)$.

In contrast to the previous section, we do not assume the (differential) entropy of $\rvM$ is finite; instead, it suffices to assume that the \emph{quantized} meanings $\rvJ$ have finite entropy: $\ent(\rvJ) < \infty$. With this assumption in place, we upper-bound $\Prob(A)$---the likelihood of $\epsilon$-accurate recovery---via an upper bound on $\Prob(B)$, yielding the following theorem:

\begin{theorem}[$\epsilon$-accurate regression]
\label{coro:regression}
    Let $g: \setU \to \setZ$ be a map from utterances ($\setU$) to representations ($\setZ$). Let $(\mathcal M, \metric)$ be a metric space and $N(\epsilon)$ be its maximal $2\epsilon$-packing number. Assume $\ent(\rvJ) < \infty$ for $\epsilon > 0$. Then, when $N(\epsilon)>1$,
    \begin{align}
        \sup_{\f \in \setM^{\setZ}} \Prob[\metric(\f(\g (\rvU)) , \rvM) \leq \epsilon ] &\leq  \frac{\MI(\rvJ ; \rvU)+\ent_2(B)}{\ent(\rvJ)}
        =
         \frac{\overbrace{\MI(\rvJ;\rvU,\rvC)}^{\text{overall decodability}} - \overbrace{\MI(\rvJ;\rvC \mid \rvU)}^{\text{meaning from context only}}+\ent_2(B)}{\ent(\rvJ)}.
    \end{align}
    In the degenerate case when $N(\epsilon) = 1$, the likelihood of $\epsilon$-accurate recovery $\Prob[\metric(\f(\g (\rvU)) , \rvM) \leq \epsilon ]=1$ for all $\f \in \setM^{\setZ}$.
\end{theorem}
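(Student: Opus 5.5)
The plan is to reduce the regression claim to the classification bound of \Cref{thm:classification}, applied to the quantized meaning $\rvJ = q(\rvM)$, which ranges over the packing points $\{\insM_n\}_{n=1}^{N(\epsilon)}$. Fix an arbitrary decoder $\f \in \setM^{\setZ}$ and put $\widehat\rvM = \f(\g(\rvU))$. Define a derived decoder $\f' \colon \setZ \to \{\insM_n\}$ by $\f'(z) = q(\f(z))$, the packing point nearest to $\f$'s output, with ties broken by a fixed rule. Then $\widehat\rvJ \defeq \f'(\g(\rvU))$ is a function of $\rvU$ alone. This gives the Markov chain $\rvJ \to \rvU \to \widehat\rvJ$, which is all Fano's inequality requires.

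The first and most delicate step is to show $A \Rightarrow B$, that is, $\metric(\widehat\rvM,\rvM)\le\epsilon$ implies $q(\widehat\rvM) = q(\rvM)$. Separation at $2\epsilon$ alone does not give this: two points within $\epsilon$ of each other can fall on opposite sides of a Voronoi boundary. I expect this to be the main obstacle. I would first try the argument sketched in the text, using the triangle inequality together with maximality of the packing (every point lies within $2\epsilon$ of some $\insM_n$). If that fails in general, I would repair it by working with the cells $q^{-1}(\insM_n)$ directly. Concretely, I would replace event $B$ with the event that $\widehat\rvJ = \rvJ$ holds for the optimal classifier $\f^\star$ of $\rvJ$ given $\rvU$. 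I would then bound $\Prob(A)$ by $\Prob(\rvJ \text{ correctly decoded by some function of } \rvU)$. The route is to note that on $A$, $\rvM$ lies in the $\epsilon$-ball around $\widehat\rvM$, and that this ball meets the cells of few packing points. If needed, I would coarsen the packing radius so that each $\epsilon$-ball meets only one cell. Whatever form the repair takes, the deliverable of this step is $\Prob(A) \le \Prob(\widehat\rvJ = \rvJ)$ for some decoder of $\rvJ$ that is a function of $\rvZ$.

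With that in hand, apply Fano's inequality, exactly as in the proof of \Cref{thm:classification}, to $\rvJ$, which has finite entropy by assumption and support of size $N(\epsilon)\ge 2$. Write $p_B = \Prob(B)$. The inequality gives
\[
\ent(\rvJ\mid \rvU) \le \ent(\rvJ \mid \widehat\rvJ) \le \ent_2(B) + (1-p_B)\,\ent(\rvJ),
\]
where the first inequality is data processing. Here I would use the countable-alphabet form of Fano that \Cref{thm:classification} already relies on, which bounds the error term by $\ent(\rvJ)$. Rearranging gives $p_B\,\ent(\rvJ) \le \MI(\rvJ;\rvU) + \ent_2(B)$. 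Combined with $\Prob(A)\le p_B$, this yields the bound for each fixed $\f$, and taking the supremum over $\f$ finishes the inequality. The equality with the decomposed form follows from the chain rule $\MI(\rvJ;\rvU,\rvC) = \MI(\rvJ;\rvU) + \MI(\rvJ;\rvC\mid\rvU)$, which is the analogue of \Cref{eq:mi_breakdown} with $\rvJ$ in place of $\rvM$.

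For the degenerate case $N(\epsilon)=1$, a maximal $2\epsilon$-packing is a single point $\insM_1$. By maximality, no point of $\setM$ lies at distance $\ge 2\epsilon$ from $\insM_1$, so the diameter of $\setM$ is small. I would then argue that $\metric(\widehat\rvM,\rvM)\le\epsilon$ holds for all $\f$. This again rests on the triangle inequality, and this step also needs care: it gives $\metric \le 4\epsilon$ directly, so the convention on radii must be matched to what the statement asserts.
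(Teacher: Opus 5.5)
\textbf{Gap: the reduction $A\Rightarrow B$ fails, and so does the bound.} You single out this step yourself, and the proposal never establishes $\Prob(A)\le\Prob(\widehat{\rvJ}=\rvJ)$ for any decoder of $\rvJ$ that is a function of $\rvZ$. It cannot be established, because the stated inequality is false. Your outline is otherwise the paper's: quantize, $A\Rightarrow B$, Fano, data processing, chain rule. The paper handles this step by asserting that, ``by definition of $2\epsilon$-packing,'' $\metric(\widehat{\rvM},\rvM)\le\epsilon$ forces the packing point nearest $\f(\rvZ)$ to be $q(\rvM)$. Your Voronoi-boundary objection refutes exactly that assertion.

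For a counterexample to the theorem itself, let $\setM=[0,2\epsilon]^2$ with the Euclidean metric. Its four corners form a maximal $2\epsilon$-packing: every point is within $\sqrt2\,\epsilon$ of a corner, and by pigeonhole on the quarter-squares no five points are pairwise $2\epsilon$ apart. Let $\rvM$ be uniform on the four points $(\epsilon\pm\epsilon/2,\,\epsilon\pm\epsilon/2)$, one per Voronoi cell. Let $\rvU$ be independent of $\rvM$ (with $\rvC=\rvM$ if you want $\ent(\rvM\mid\rvU,\rvC)=0$), and let $\f$ be constant at the center $(\epsilon,\epsilon)$. Every support point is at distance $\epsilon/\sqrt2$ from the center, so $\Prob(A)=1$. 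Yet $\rvJ$ is uniform on the corners, so $\ent(\rvJ)=2$ bits and $\MI(\rvJ;\rvU)=0$. Since $\ent_2\le1$, the right-hand side is at most $1/2$ however $\ent_2(B)$ is read. With $\widehat{\rvJ}$ the corner nearest the center, $\Prob(B)=1/4$ and the right-hand side equals $\ent_2(1/4)/2\approx0.41$.

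\textbf{Your repairs, and two further cautions.} The same example defeats each repair you list.
\begin{itemize}
\item The Bayes classifier of $\rvJ$ from $\rvU$ is also correct only with probability $1/4$. So $\Prob(A)$ is not dominated by the success probability of any single guess of $\rvJ$.
\item The $\epsilon$-ball about $\widehat{\rvM}$ meets all $N(\epsilon)$ cells here. In a general metric space it can meet all $k$ cells, e.g.\ $k$ points pairwise $2\epsilon$ apart and all at distance $\epsilon$ from a hub. Your ``few cells'' idea therefore leads to list-decoding Fano. That adds a $\log K$ term to the numerator, with $K$ the largest number of cells an $\epsilon$-ball can meet, and does not give the stated bound.
\item No radius makes every $\epsilon$-ball meet only one cell of a connected space with at least two cells. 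Changing the radius also changes $\rvJ$, and hence the statement being proved.
\end{itemize}
The reduction is sound in the classical Fano-method setting, where $\rvM$ itself takes values in a $2\epsilon$-separated set. There $\metric(\widehat{\rvM},\rvM)<\epsilon$ makes $\rvM$ the unique packing point nearest $\widehat{\rvM}$, by the triangle inequality.

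First caution: the Fano form you import from \Cref{thm:classification}, $\ent(\rvJ\mid\widehat{\rvJ})\le\ent_2(B)+(1-\Prob(B))\,\ent(\rvJ)$, is not Fano's inequality, and it fails for non-uniform $\rvJ$. Take $\rvJ=j_0$ with probability $1/2$ and otherwise uniform over eight other values, with $\widehat{\rvJ}\equiv j_0$. The left side is $2.5$ bits and the right side is $2.25$. The form does hold for uniform $\rvJ$, and the classical $\log(N(\epsilon)-1)$ form always holds.

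Second caution: your doubt about the degenerate case is justified. $N(\epsilon)=1$ only says that no two meanings are $2\epsilon$ apart. For $\setM=\{0,1.5\epsilon\}\subset\mathbb R$ with $\rvM$ uniform and independent of $\rvU$, every $\f$ succeeds with probability exactly $1/2$. The paper's appendix proof does not treat this case.
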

\begin{proof}
    See \Cref{app:more_theory2}.
\end{proof}

\Cref{thm:classification,coro:regression} tell us several things. First, when predicting speaker meanings from any representation of the utterance, the probability of success \emph{cannot exceed} a function of the (purely) linguistic informativity $\MI (\rvM; \rvU)$ in communication. $\MI (\rvM; \rvU)$ can be decomposed into the difference of two terms $\MI(\rvM ; \rvU,\rvC) - \MI(\rvM;\rvC\mid \rvU)$, which respectively correspond to \emph{overall meaning decodability} in communication and \emph{what context conveys about meaning independently from the utterance}. That is, whether communication is \emph{low}- or \emph{high}-context, governed by $\MI(\rvM;\rvC\mid \rvU)$, adjusts the ceiling on $p_e$. 

\section{Experiments}
\label{sec:experiments}

In the previous section, we put forth theoretical bounds for speaker meaning decodability from any representation of the utterance. Now, we show that these bounds hold in practice. As \Cref{thm:classification} requires access to ground truth speaker meanings, we will rely on simulated and real datasets for which intended meanings are  directly available. We test both settings: (\Cref{thm:classification}) meanings that are \emph{categorical}  and (\Cref{coro:regression}) \emph{continuous}. We first verify \Cref{thm:classification,coro:regression} in simulation using artificial languages sampled from pre-defined distributions. Then, we turn to natural language data: for categorical meanings, we consider \emph{zero-pronoun resolution} in Mandarin Chinese, and for continuous meanings, we consider a \emph{color-naming} task using \citeposs{monroe-etal-2017-colors} color reference dataset. In all cases, experiments are consistent with the theoretical bounds. We briefly sketch the task setups in this section; \textbf{for details see \Cref{app:cleaning,app:training}}.

\subsection{Artificial languages}

We start by testing \Cref{thm:classification,coro:regression} on artificial languages sampled from pre-defined distributions $\Prob_{\rvM,\rvU,\rvC}$. Crucially, each language is constructed such that $\ent(\rvM \mid \rvU, \rvC)=0$, which we call the \emph{valid language assumption}: meaning can be perfectly decoded from the utterance in context, rendering communication successful. We sample a train and test dataset from $\Prob_{\rvM,\rvU,\rvC}$, then retain only the utterance-meaning pairs to train a decoder for meaning inference. Note that, since utterances are not constrained to be natural language, we do not use pre-trained LLMs for the decoder: instead, we opt for simple MLPs, moving on to LLMs for the natural language tasks in the next section. 

\paragraph{Discrete meanings.} 
We construct artificial languages by defining $\Prob_{\rvM, \rvU, \rvC}$, where $\setM$ and $\setC$ are finite sets and $\setU$ is an alphabet of symbols. \Cref{fig:panel}A illustrates this setting. In brief, in a (categorical) context $\rvC$ (\tikz\draw[draw={rgb,255:red,95;green,95;blue,211},
           fill={rgb,255:red,95;green,95;blue,211}]
  (0,0) circle (.5ex); in \Cref{fig:panel}A), the speaker intends a categorical meaning (e.g., $\blacksquare$) and produces a single symbol in $\setU$ (e.g., ``a''). The listener tries to decode back the original categorical meaning ($\blacksquare$) given the utterance (``a''). We ask whether a map can be trained to predict $\rvM$ from $\rvU$ alone.

To answer this question, we construct a family of six artificial languages.  The marginal distributions $\Prob_{\rvM}$, $\Prob_{\rvC}$, and $\Prob_{\rvU}$ over meanings, contexts, and utterances are held uniform over $|\setM| = |\setC| = 10$, $|\setU| = 15$ values, and we systematically vary $\MI(\rvM; \rvU) \in [0,\, \log_2|\setM|]$. Lastly, on each language, we validate \Cref{thm:classification} by training an MLP to infer meanings $\rvM$ from utterances $\rvU$, and verify the best-case test accuracy (across hyperparameters) is lower than the theoretical bound. 

\begin{figure}[t]
    \centering
\includegraphics[width=\linewidth]{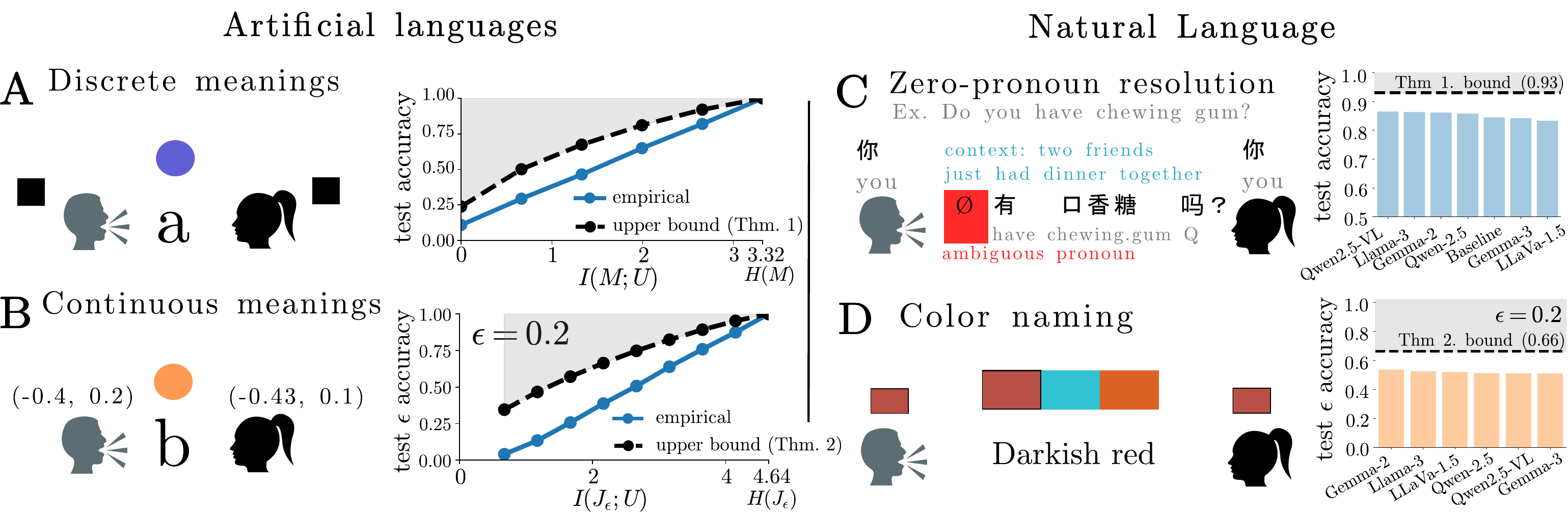}
\vspace{-4ex}
    \caption{\small \textbf{Experiments.} \textbf{A:} \emph{(Left)} Artificial languages over a categorical meaning space. In a shared categorical context $\rvC=$ \tikz\draw[draw={rgb,255:red,95;green,95;blue,211},
           fill={rgb,255:red,95;green,95;blue,211}]
  (0,0) circle (.5ex);, the speaker (left) intends a meaning $\rvM=\blacksquare$ and encodes it into an utterance $\rvU=$``a''. The listener decodes the utterance in context back into the original meaning. \emph{(Right)} For a family of six artificial languages (dots on the curves) that vary $\MI(\rvM;\rvU)$ (x-axis), the theoretical upper bound (black curve) exceeds the best empirical test accuracy across all runs (blue). This validates \Cref{thm:classification}. \textbf{B:} \emph{(Left)} Artificial languages over a continuous meaning space, in the $\epsilon$-recovery setting. In shared categorical context $\rvC=$ \tikz\draw[draw={rgb,255:red,255;green,153;blue,85},fill={rgb,255:red,255;green,153;blue,85}] (0,0) circle (.5ex);, the speaker intends a two-dimensional coordinate $\rvM$ sampled from $\Prob_{\rvM} = \textrm{Unif}[-1, 1]^2$ and encodes it into an utterance $\rvU=$``b''. The listener, given the utterance in context, approximates the original meaning. \emph{(Right)} For eight artificial languages that vary over $\MI(\rvJ; \rvU)$, \Cref{coro:regression} holds, seen by the theoretical bound exceeding the empirical test accuracy in all cases. \textbf{C:} \emph{(Left)} Zero-pronoun resolution in Mandarin Chinese. In a context where the speaker and listener just had dinner together, the speaker asks the listener whether she has chewing gum, dropping the pronoun (you) from the utterance. The listener, given context, is able to infer the true pronoun. \emph{(Right)} For zero-pronoun resolution from LLM and VLM activations, the predicted accuracy ceiling ($0.93$, dashed line) exceeds maximum empirical test accuracies across all runs (blue bars), validating \Cref{thm:classification}. The y-axis is cut off at $0.5$ for legibility. \textbf{D:} \emph{(Left)} Color naming in context \citep{monroe-etal-2017-colors}. Given a shared context $\rvC$ of three color chips, the speaker is shown the target chip $\rvM=$ \tikz\draw[draw={rgb,255:red,185;green,80;blue,70},fill={rgb,255:red,185;green,80;blue,70}] (0,0) rectangle (1.4ex,1ex); and says $\rvU=$``Darkish red'', which the listener decodes in context to click on the target. \emph{(Right)} \Cref{coro:regression}'s ceiling for $\epsilon$-accurate recovery ($0.66$, dashed line) exceeds all models' best-case empirical performances over runs (orange bars).}
    \label{fig:panel} \vspace{-2.5ex}
\end{figure}

\paragraph{Continuous meanings.}
Here, we construct artificial languages that communicate over a \emph{continuous} meaning space. For simplicity, contexts and utterances remain categorical, with meanings drawn from a continuous uniform distribution $\Prob_{\rvM} = \textrm{Unif}[-1,1]^2$. \Cref{fig:panel}B illustrates this setting: in a shared categorical context (\tikz\draw[draw={rgb,255:red,255;green,153;blue,85},fill={rgb,255:red,255;green,153;blue,85}] (0,0) circle (.5ex); in the figure), the speaker intends a meaning drawn from $\textrm{Unif}[-1,1]^2$ (here, $[-0.4, 0.2]$) and produces an utterance (``b''). Given the utterance, the listener approximately decodes the meaning (e.g., $[-0.43, 0.1]$). 

We construct a family of eight languages $\Prob_{\rvM, \rvU, \rvC}$ that satisfy the valid language constraint 
($\ent(\rvM \mid \rvU, \rvC) = 0$)
and systematically vary across $\MI(\rvJ; \rvU)$, holding all else equal.
Details on how we construct the languages are given in \Cref{app:cleaning}. Finally, to validate \Cref{coro:regression}, we train an MLP to predict $\rvM$ from $\rvU$, minimizing the MSE loss. We then compare the best-case empirical test $\epsilon$-accuracy across hyperparameter sweeps to the theoretical upper bound. 

\subsection{Natural language}

Moving on from artificial languages to natural language, we use zero-pronoun resolution in Mandarin Chinese \citep{Wang2018TranslatingPL} as a testbed for the discrete \Cref{thm:classification} and color reference in context \citep{monroe-etal-2017-colors} for the continuous \Cref{coro:regression}. Here, since we are dealing with natural language, we use pre-trained LLM activations for the utterance featurizer $\g$, and an MLP for the decoder $\f$ mapping LLM activations to the meaning. We first list the LLMs, then sketch the tasks. 

\paragraph{Models.} For the utterance featurizer $\g: \setU \to \setZ$, we use the activations of contemporary LLMs. As a point of comparison, we also consider vision-language models (VLMs) that may also be instruction-tuned, as they are trained with extralinguistic  supervision (visual and human preferences, respectively). We test six models ranging from 2B-14B parameters: three LLMs, Qwen-2.5 14B, Llama-3 8B, Gemma 2 2B, and three VLMs, Qwen-2.5 VL (7B, instruction-tuned), Llama 3.2 Vision (11B), and Gemma 3 (12B, instruction-tuned). 
For each model, we aim for the best meaning inference performance possible by searching over the last-token representations\footnote{We use the last token representation as it is the only one to attend to the entire sequence.} of all layers.

\paragraph{Zero-pronoun resolution in Mandarin Chinese.}

Mandarin Chinese is a ``pronoun-drop'' (pro-drop) language: in pro-drop languages, a pronoun can be excluded from a grammatical sentence if the pronoun's identity can be inferred from surrounding context. For instance, Chinese grammar licenses both examples \textbf{(a)} and \textbf{(b)} below to mean ``do you have chewing gum?'', where, notably, \textbf{(b)} shows a ``zero'' pronoun ($\varnothing$) in place of the second-person singular \zh{你} (`you').

\begin{exe}
\ex
\begin{minipage}[t]{0.48\linewidth}
\textbf{(a)}\label{ex:with}

\gll \zh{你} \zh{有} \zh{口香糖} \zh{吗？} \\
     nǐ yǒu kǒuxiāngtáng ma \\
     2SG have chewing.gum Q
\end{minipage}\hfill
\begin{minipage}[t]{0.48\linewidth}
\textbf{(b)}\label{ex:without}

\gll \O\ \zh{有} \zh{口香糖} \zh{吗？} \\
     \ \ yǒu kǒuxiāngtáng ma \\
     {\ } have chewing.gum Q
\end{minipage}
\end{exe}

The task to infer the dropped pronoun from the utterance is known as \emph{zero-pronoun resolution}, shown in \Cref{fig:panel}C: in a context where the speaker and listener just had dinner together, the speaker asks the listener if she has chewing gum. Though the pronoun is dropped from the utterance, the listener, given context, infers the speaker asks \emph{her} for chewing gum, recovering the true pronoun ``\zh{你}'' (you).

We use zero-pronoun resolution to test \Cref{thm:classification}, repurposing \citeposs{Wang2018TranslatingPL} Chinese TV subtitles dataset.
After preprocessing (\Cref{app:cleaning}), the dataset contains $N=904{,}996$ short utterances. Of these utterances, $20.3$\% ($N=184{,}092$) drop the pronoun (as in example \textbf{(b)}).  
The context $\rvC$, including all audiovisual TV content apart from the utterance, is not included in the dataset, but we assume that Chinese speakers recover $\rvM$ given $\rvU$ and $\rvC$ with little difficulty (valid language assumption). The final number of pronoun classes in the data is $|\setM| = 12$ (see \Cref{app:cleaning} for details); the number of possible forms is $|\setU|=13$ (possible pronouns + dropped pronoun). 

We extracted frozen LLM embeddings $\g : \setU \to \mathbb R^d$ of the utterances. Then, we trained an MLP $\f : \mathbb R^d \to \Delta^{11}$ from the LLM embedding to predict the true pronoun, minimizing the cross-entropy loss.
We compare against a strong \textbf{baseline} that directly selects the pronoun in the sentence if expressed and guesses the most frequent one \zh{我} (`I') if dropped.\footnote{Because only $20.3\%$ of the dataset showed a dropped pronoun (i.e., $79.7\%$ of the pronouns are explicit), we expect baseline performance to achieve $\approx80\%$ accuracy.} Lastly, for each model, we verify that the best-case empirical test accuracy across hyperparameters is lower than the theoretical bound.

\paragraph{\citeposs{monroe-etal-2017-colors} color naming task.}

We use \citeposs{monroe-etal-2017-colors} colors dataset as our testbed for $\epsilon$-accurate regression (\Cref{coro:regression}). The dataset is collected from humans playing a referential game \citep{Lewis1969-LEWCAP-10}. During a single round of the task, a speaker and listener are presented the same three color chips (see \Cref{fig:panel}D), where the speaker must communicate the color of a \emph{target} chip to the listener; only the speaker knows the identity of the target chip in advance. The speaker sends a message denoting the target color, e.g., ``Darkish red", to the listener, who then clicks on their guess of the target among the three chips. Communication is considered successful if the listener clicks on the correct target, given the speaker utterance and context of three chips.

We filter the dataset for communicative successes, satisfying the valid language constraint $\ent(\rvM \mid \rvU, \rvC)=0$. This yields $N=38975$ rounds of the task from various speaker-listener pairs. 
Each round contains representations for three color chips (\emph{context}) in CIELab (a 3D color space like RGB where Euclidean distances mirror human perceptual distances) \citep{gualdoni-boleda-2024-objects},
the target chip (\emph{intended meaning}), and speaker message in natural language (\emph{utterance}). 
Finally, for each model and $\epsilon$, we collect the best-case test $\epsilon$-accuracies over all runs, and verify that these lie below the theoretical upper bound in \Cref{coro:regression}. For details on estimating the upper bound, see \Cref{app:cleaning}.



\subsection{Results}
Experiments confirm the bounds on meaning inference performance predicted by \Cref{thm:classification,coro:regression}. 

\noindent \textbf{Artificial languages.} \quad \Cref{fig:panel}A (right) shows, for artificial languages whose $\MI(\rvM; \rvU)$ spans $0$ to $\log_2(|\setM|)$, the empirical test accuracy on meaning inference from utterance alone (blue curve) with the theoretical bound (dashed black curve). Two observations: first, the bound holds in practice for all tested languages (the blue curve is lower than black); second, as expected, as the mutual information between meaning and utterance increases, the theoretical bound on meaning inference performance, as well as the empirical performance, both increase. Recall that the meaning distribution, a uniform categorical distribution, is held fixed, so the primary term impacting the RHS bound is $\MI(\rvM;\rvU)$. \Cref{fig:panel}B (right) shows a similar trend for the continuous case (for $\epsilon=0.2$; other values of $\epsilon$ can be found in \Cref{app:artificial_extra_results}). Like the discrete case, the bounds hold for all languages spanning $\MI(\rvJ; \rvU)$. 

\paragraph{Zero-pronoun Resolution.} \Cref{fig:panel}C shows that in all models (x-axis), the best meaning inference performance (y-axis) over all hyperparameters and random seeds does not exceed the bound predicted by \Cref{thm:classification} ($0.93$, horizontal line). Interestingly, using LLMs or VLMs did not considerably improve performance beyond the baseline (accuracy $0.85$). This observation suggests that, for this dataset, the utterance alone carries little information about the dropped pronoun; training on web-scale data, moreover, does not overcome this bottleneck. Instead, we hypothesize that context ($\rvC$) such as visual and social cues from the TV show \citep{Wang2018TranslatingPL} enable this inference.

\paragraph{Color naming.}  \Cref{fig:panel}D shows, for $\epsilon=0.2$, the predicted ceiling on meaning inference performance (dashed line); the bound holds for all models. All $\epsilon$ spanning $[0.01, 0.75]$ are shown in \Cref{fig:other_eps_colors}; the bound held in all cases, though is tighter for larger $\epsilon$, see \Cref{app:color_results} for discussion.

\section{Related Work}

A sizeable and growing body of work asks whether LLMs---their behavior or internal activations---can instantiate models or theories of language \citep{bender-koller-2020-climbing,Baroni_2022,Piantadosi_2024,boleda-2025-llms,Futrell_Mahowald_2026}. The present work was inspired by these broader debates.
\Cref{thm:classification,coro:regression} suggest that text-only LLMs should not be used to instantiate a \emph{theory} of meaning inference during naturalistic communication, given context is a crucial missing piece. However, our findings would still be consistent with using text-only LLMs' activations to \emph{model} aspects of linguistic representation that can be derived from utterance alone. 

Most similar to our work, \citet{Merrill_Goldberg_Schwartz_Smith_2021} also ask whether ungrounded LLMs are able to acquire semantics. In brief, \citet{Merrill_Goldberg_Schwartz_Smith_2021} formally show that there exist languages for which semantic equivalence (whether two utterances have the same \emph{denotation}) cannot be recovered from ungrounded linguistic input, even by a perfect learner with access to unlimited data and an ``oracle'' module providing semantic information. Our work addresses a similar high-level question and arrives at qualitatively similar conclusions, but we take a different route. In particular, (1) we consider pragmatic meaning while \citet{Merrill_Goldberg_Schwartz_Smith_2021} consider denotational meaning over sets; (2) our definition of successful meaning inference is metric-based ($\epsilon$-accurate recovery), thus gradable, while Merrill and colleagues require exact set-matching; (3) our analytic framework is \emph{information-theoretic} while theirs is based on computability; (4) we establish bounds directly on meaning decodability from a representation of text that can be empirically validated, while their results (more abstractly) establish the non-computability of semantic representation for a class of context-dependent languages. Intriguingly, the two works which diverge radically in method and conceptualization of meaning both recover (extralinguistic) grounding as the missing ingredient for meaning representation. 

Our main result (\Cref{thm:classification,coro:regression}), i.e., that text-based systems learn provably impoverished meaning representations, aligns with empirical findings that base LLMs perform worse than humans in implicature \citep{ruis_pragmatic} and presupposition \citep{sravanthi-etal-2024-pub,Chang_Bergen_2024} (see \citet{ma-etal-2025-pragmatics} for a survey). A number of positive results in pragmatic meaning inference have also been reported notwithstanding \citep{andreas-2022-language,hu-etal-2023-fine,Chang_Bergen_2024}; \Cref{thm:classification,coro:regression} suggest that high mutual information between surface forms and meanings may explain positive empirical findings in text-only LLMs' representation of pragmatic meaning.

\section{Conclusion}
We have put forth theoretical limits on meaning inference from representations of utterance alone (\Cref{thm:classification,coro:regression}). While \Cref{thm:classification,coro:regression} apply to \emph{any} representation of the utterance, we verified them for the internal activations of contemporary language and vision-language models, popular choices to model meaning in practice. With experiment corroborating theory, our take-home message is that there is no free lunch in meaning inference when meanings are gleaned through a linguistic channel. In natural settings, humans also make use of context as a conduit for meaning inference; however, utterance forms alone---the input to LLMs---bottleneck meaning representation, no matter whether models were trained on trillions of tokens of text or with extralinguistic supervision. 


\section*{Limitations} 
Our computational resources did not permit testing models larger than 14B. It is plausible that state-of-the-art LLMs would climb closer to the theoretical ceiling predicted by \Cref{thm:classification,coro:regression}. In addition, context provides a backchannel to infer meaning. An extension of this work can consider the extent to which adding context aids meaning inference, deriving similar representational limits under enriched contexts. These limits could pertain to, e.g., instruction-tuned models modified with a system prompt. Finally, we only consider meaning representation in a weak sense, i.e., statistical decodability. Future work should analyze meaning representation from a \emph{causal} lens, where decodability is a necessary first step, but not sufficient, to show whether systems trained on forms alone \emph{use} representations of meaning.


\subsubsection*{Acknowledgements} EC received financial support from the Catalan government (AGAUR grant SGR 2021 00470). This project has received funding from the European Research Council (ERC) under the European Union’s Horizon 2020 research and innovation programme (grant agreement No. 101019291). This paper reflects the authors’ view only, and the funding agency is not responsible for any use that may be made of the information it contains.

EC thanks Lan Chen for consulting on Mandarin Chinese, Gemma Boleda for recommending CIELab space for colors, and the COLT lab at Universitat Pompeu Fabra and the NYU Computation and Psycholinguistics lab for helpful feedback.

\bibliography{custom}

\makeatletter\let\addcontentsline\oldaddcontentsline\makeatother

\newpage 
\appendix
{
  \hypersetup{linkcolor=black}
  \tableofcontents
}

\newpage 
\section{Generative AI Disclosure}
Claude Code was used to clean the \citet{monroe-etal-2017-colors} colors dataset and to automate parts of the coding pipeline, especially the data pre-processing. Claude Code was used to polish the writing as well as perform mechanical tasks in the \LaTeX, e.g., ensuring uniformity in the bibliography formatting and strict usage of the \LaTeX  macros.
Claude Code also checked the proofs of both theorems.
The authors take responsibility for all the content in this article. 

\section{Task setup}
\label{app:cleaning}

\subsection{Artificial languages: discrete meaning}

Our goal is to construct a family of distributions $\Prob_{\rvM,\rvU,\rvC}$ such that they (1) are a valid language ($\ent(\rvM\mid \rvU, \rvC)=0$) and (2) we can systematically control how informative the utterance is about the meaning $\MI(\rvM;\rvU)$. For simplicity, we set context $\rvC$ to follow a uniform categorical distribution over $|\setC|=10$ categories. Similarly, meanings $\rvM$ are drawn from a (separate) uniform categorical distribution over $|\setM|=10$ categories. We set the utterances $\rvU$ to uniformly vary over $|\setU|=15$ categories; the goal is then to design the joint distribution over $\rvU$, $\rvM$ and $\rvC$ such that our constraints are satisfied.

We now design the mappings between $\rvM$, $\rvC$, and $\rvU$. Let
$p := \MI(\rvM; \rvU) / \log_2 |\setM|$ be the target
\emph{fraction} of maximum information. We split the utterance
alphabet $\setU$ into two disjoint families, corresponding to ``informative'' and ``context-dependent'' regimes. We let the probability that we are in the informative regime be realized by $p$, described below:

\begin{description}
\item[Informative utterances ($\insU \in \{0, \ldots, |\setM|-1\}$).]
      We set aside $|\setM|$ utterances that are context-\emph{independent}. These utterances correspond to the informative regime, where given $\insU$, the meaning is $\rvM = \insU$ deterministically and independently of $\insC$. Then, for every
      context $\insC$, 
      \[
          \Prob(\rvC = \insC,\; \rvM = \insU,\; \rvU = \insU)
          \;=\; \tfrac{1}{|\setC|}\;\cdot\;\tfrac{p}{|\setM|}.
      \]
      The expression on the RHS comes from multiplying $\Prob(\rvC=\insC)$ with the probability $p$ that we are in the informative regime and the probability $1 / |\setM|$ that $\rvU=\insU$ in the informative regime.
\item[Context-dependent utterances
      ($\insU \in \{|\setM|, \ldots, |\setU|-1\}$).]
      The remaining utterances are context-\emph{dependent}. Each context-dependent utterance carries $0$ bits about $\rvM$
      unconditionally but recovers $\rvM$ perfectly when combined with $\insC$. That is, given $\insU$ alone, the meaning is \emph{uniform}; given $\insU$ and
      $\insC$, the meaning is recoverable. We construct the mapping from $\insU, \insC$ to $\insM$ by setting up a deterministic cyclic shift with $\insC$ acting as a ``key'':
      \[
          \Prob(\rvC = \insC,\; \rvM = (\insU - |\setM| + \insC)\bmod|\setM|,\;
                     \rvU = \insU)
          \;=\;
          \tfrac{1}{|\setC|}\cdot
          \tfrac{1-p}{|\setU|-|\setM|}.
      \]
      For example, consider that $\setU = \{1, 2, \cdots, 15\}$ and $\setM = \{1, 2, \cdots, 10\}$. If $\insU = 11$, and $\insC = 2$, then $\insM = 3$ deterministically. The expression on the RHS is given by $\Prob(\rvC=\insC)$ multiplied by the probability $1-p$ that we're in the context-dependent regime and the probability that $\rvU = \insU$ in that regime.
\end{description}

By construction, each
utterance-in-context pair
$(\insU, \insC)$ maps to a single meaning ($\ent(\rvM \mid \rvU, \rvC) = 0$).
Also by construction, the marginal $\MI(\rvM; \rvU)$ evaluates to exactly $p\cdot \log_2 |\setM|$.
Sweeping $p$ over six evenly-spaced values in $[0, 1]$
yields the six target information levels shown on the x-axis of \Cref{fig:panel}.

Finally, to construct train and test sets, for each target mutual information, we draw $N = 10{,}000$ training examples and
$N = 5000$ disjoint sets from the joint
distribution $\Prob_{\rvM, \rvU, \rvC}$.

\paragraph{Validating \Cref{thm:classification}.}
Now, we describe how to verify the bound in \Cref{thm:classification}. We realize the map $\f \circ \g : \rvU \mapsto \widehat \rvM$ by a small two layer MLP (see \Cref{app:training} for hyperparameter and training details).\footnote{NB: to avoid confusion, since we are not using LLMs in this setting, we directly use an MLP for $\g\circ \f$ (in the previous section, $\g$ was defined as a featurizer and $\f$ as a decoder). In the experiments on natural language, we use LLM activations for the featurizer $\g$ and an MLP for the decoder $\f$.} First, we split the data into 80/20 train-test splits. After training this decoder to predict ground-truth meanings given utterance, we
verify that the best-case empirical test accuracy across hyperparameter settings and 5 random seeds (LHS of \Cref{thm:classification}) is lower than the theoretical upper-bound (RHS).

To estimate the RHS, we solve for the theoretical best performance $p_e$ satisfying the bound, then verify that the empirical performance is lower.
Because the joint distribution $\Prob_{\rvM,\rvU,\rvC}$ was constructed analytically, we directly compute the terms $\MI(\rvM; \rvU)$ and
$\ent(\rvM)$. We then find the maximum $p_e$ satisfying the upper bound
by numerically approximation, using an off-the-shelf root-finding algorithm (\texttt{scipy.optimize.brentq}). Finally, we compare this upper bound against the empirical test accuracy for each artificial language. 

\subsection{Artificial languages: continuous meaning}
\label{sec:continuous-experiment}

Our goal is to verify the $\epsilon$-accurate regression bound (\Cref{coro:regression}) in a
continuous meaning space where we can \emph{tune} the mutual
information $\MI(\rvJ; \rvU)$ between the packing-bin label
$\rvJ$ and the observed utterance $\rvU$ across the interval $[0,\, \ent(\rvJ)]$.  Under the theorem, the empirical $\epsilon$-accuracy of any decoder $\widehat M = \f(\g(\rvU))$ must satisfy
\begin{equation}
    \Prob [ d(\widehat\rvM,\rvM) \leq \epsilon ] \leq \frac{\MI(\rvJ; \rvU) + \ent_2(p_e)}{\ent(\rvJ)}
    \label{eq:cont-bound}
\end{equation}
where $\ent_2$ is the binary entropy.

We first construct $\rvJ$ given $\epsilon$. Fix $\epsilon > 0$. We first construct the $2\epsilon$-packing by randomly sampling $N=20{,}000$ i.i.d. points, and greedily selecting packing points in the order of that sample. We repeat this process $100$ times and retain the maximal $2\epsilon$-packing.
We then map meanings $\insM$ to their nearest packing point via the quantization function $q$ (described in \Cref{sec:theory}).  For $\epsilon \in \{0.2, 0.4, 0.6\}$ we obtain $N(\epsilon) := |\rvJ| \in \{25, 9, 5\}$, and
$\ent(\rvJ) = \log_2 |\rvJ| \in \{4.64, 3.17, 2.32\}$ bits (see below for why $\ent(\rvJ)$ equals $\log_2|\rvJ|$ exactly in this setup).

\paragraph{Constructing the meaning, utterance, and context distributions.} We start by defining the meanings' marginal distribution, and work with a $N=20{,}000$ sample. In short, we set this distribution to be a noisy version of \emph{the packing}: we sample
$J_i \in \{0, \ldots, N(\epsilon) - 1\}$ uniformly and set
\begin{equation}
    \rvM_i = m_{J_i} + \eta_i,
    \qquad \eta_i \sim \mathrm{Unif}\!\bigl(B_2(0, \delta)\bigr),
    \label{eq:from-packing}
\end{equation}

where $B_2(0, \delta)$ is the 2-D ball of radius $\delta=0.1$. The per-bin count is rounded so that the total $N$ is divisible by both $|\rvJ|$ and the
context alphabet size $|\setC|$; this makes bin frequencies exactly uniform
and hence $\ent(\rvJ) = \log_2 |\rvJ|$.

Now, we move on to design the joint $\Prob_{\rvM, \rvU, \rvC}$. We keep contexts categorical for simplicity, fixing $|\setC| = 10$. We have $|U| = N/|\setC|=2000$.  Every meaning $\insM$ receives a unique slot $(\insU, \insC) \in \{0, \ldots, |\setU|-1\} \times \{0, \ldots, |\setC|-1\}$ from
Algorithm~\ref{alg:make-utc}, i.e., meanings are identified given utterance and context (valid language assumption, $\ent(\rvM \mid \rvU, \rvC) = 0$). The overall intuition is very similar to the discrete languages case: context $\rvC$ acts as a ``key'' that disambiguates the meaning given the utterance. The problem of tuning $\MI(\rvJ;\rvU)$ now turns into how we distribute utterance, context pairs to meanings---like the last section, we portion out ``informative'' and ``context-dependent'' utterances for every meaning.

We tune $\MI(\rvJ; \rvU)$ via
a scalar $p \in [0, 1]$ as follows:
\begin{description}
\item[\textbf{Informative utterances}] We consider the ``bins'' of meanings that the quantizer $q$ assigns to each packing point $\rvJ=j$. For each $\rvJ$-bin $j$ containing $n_j$ meanings, we take 
      $n_{\textrm{inf}} = \lfloor p\,n_j / |\setC| \rfloor \cdot |\setC|$ meanings and pack them into
      \emph{dedicated} ``$\rvU$-groups''. Each $\rvU$-group contains $|\setC|$ meanings drawn
      only from bin $j$. In these groups, the utterance perfectly identifies
      $\rvJ$ alone, i.e., the mutual information with $\rvJ$ is maximal.
\item[\textbf{Context-dependent utterances}] Then, we pool the remaining $n_{\textrm{amb}}$ meanings across all bins and assign them to
      \emph{shared} $\rvU$-groups in a round-robin way: after
      sorting the pool by $\rvJ$-bin, meaning at position
      $i$ is assigned $\insU = \insU_{\text{offset}} + (i \bmod n_{\textrm{amb}}/|\setC|)$
      and $\insC = \lfloor i / (n_{\textrm{amb}}/|\setC|) \rfloor$. Then, round-robin
      spreads each bin uniformly across all shared $\rvU$-groups, so
      $\Prob(\rvJ \mid \rvU)$ in a shared $\rvU$-group equals the
      marginal $\Prob(\rvJ)$ (up to $\pm 1$-count boundary
      effects when $|\setC|$ does not divide $n_j$).
\end{description}

By construction the informative contribution to $\MI(\rvJ; \rvU)$
is exactly $p \times \ent(
\rvJ)$ and the shared contribution is
zero. Sweeping
$p \in \{0, 1/8, 2/8, \ldots, 1\}$ therefore sweeps
$\MI(\rvJ; \rvU)$ across $[0, \ent(\rvJ)]$ while keeping the valid language constraint.

\paragraph{Validating \Cref{coro:regression}.} For each artificial language, we validate the bound. To do so, we portion the data into 80/20 train-test splits, and train a two-layer MLP to predict the continuous ground-truth meaning from the utterance, minimizing the MSE loss. The LHS of \Cref{coro:regression} is given by the test $\epsilon$-accuracy, i.e., the fraction of test datapoints predicted within-$\epsilon$ of the ground truth. For training and hyperparameter details, see \Cref{app:training}.

To estimate the upper bound on $\epsilon$-accurate recovery, we first estimate the maximum $\Prob(B)$ that satisfies \Cref{coro:regression}, where, recall, $\Prob(B)$ is the likelihood of correct classification over packing points, and $\Prob(B) \geq \textrm{\Cref{coro:regression} LHS}$ by construction. This entails estimating $\ent(\rvJ)$ via the plug-in entropy of the empirical bin
frequencies of the packing. We estimate $\MI(\rvJ; \rvU)$ using the plug-in
mutual information estimator. First-order Miller-Madow correction is applied to both estimates \citep{Miller1955}. Finally, we numerically solve for the upper bound and verify that the empirical test $\epsilon$-accuracy falls below this bound.

\begin{algorithm}[t]
\caption{Build $(\rvU, \rvC)$ from $\rvJ$
  with tunable $\MI(\rvJ; \rvU) \approx p\, \ent(\rvJ)$.}
\label{alg:make-utc}
\begin{algorithmic}[1]
\Require Quantizer $q: \setM \to \mathbb N_{1\cdots |\rvJ|}$, $\rvJ$;
         context size $|\rvC|$ where $|\setC| \mid N$; parameter $p \in [0, 1]$.
\State Initialize $\rvU[\insM] = \rvC[\insM] = -1$ for all $\insM$; $\insU \gets 0$;
       $\mathcal A \gets \emptyset$.
\ForAll{$j$ in $\{0, \ldots, |\rvJ|-1\}$} \Comment{Iterate over packing bins}
  \State $\text{bin}_j \gets$ random permutation of
         $\{\insM : q(\insM) = j\}$
  \State $n^{\mathrm{inf}}_j \gets \lfloor p\, |\text{bin}_j| / |\rvC| \rfloor \cdot |\rvC|$
  \For{$s = 0, |\setC|, 2|\setC|, \ldots, n^{\mathrm{inf}}_j - |\setC|$}
    \For{$c = 0, \ldots, |\setC|-1$}
      \State $U[\text{bin}_j[s + c]] \gets u$;
             $C[\text{bin}_j[s + c]] \gets c$
    \EndFor
    \State $u \gets u + 1$
  \EndFor
  \State Append $\text{bin}_j[n^{\mathrm{inf}}_j :]$ to $\mathcal A$
\EndFor
\State $|\mathcal A| = n_{\text{amb}}$; $n^U_{\text{amb}} \gets n_{\text{amb}} / |\setC|$
\For{$i = 0, \ldots, n_{\text{amb}} - 1$}
  \Comment{Round-robin: spreads each bin across all shared U-groups}
  \State $U[\mathcal A[i]] \gets u + (i \bmod n^U_{\text{amb}})$;
         $C[\mathcal A[i]] \gets \lfloor i / n^U_{\text{amb}} \rfloor$
\EndFor
\State \Return $U, C$
\end{algorithmic}
\end{algorithm}



\subsection{Mandarin pro-drop dataset \citep{Wang2018TranslatingPL} }

\paragraph{Meaning, context, and utterance distribution.} We repurpose a dataset of $N=2{,}150{,}945$ short Mandarin TV subtitles from \citet{Wang2018TranslatingPL}. We start by filtering the dataset to utterances containing exactly one (explicit or dropped) pronoun, after which there are $N=904{,}996$ utterances. We then construct a dataset of (utterance, true pronoun) pairs (the true pronouns are already provided by \citet{Wang2018TranslatingPL}). The meaning inference pipeline (consisting of frozen LLM representation $+$ small MLP decoder) is trained to predict the true pronoun from the utterance.

We considered 12 personal pronouns in Mandarin Chinese. Singular pronouns include ``\zh{我}'' (I), ``\zh{你}'' (you), ``\zh{您}'' (you, honorific), \zh{他} (`he'), \zh{她} (`she'), \zh{它} (`it'), along with their plural forms (singular form +\zh{们}). We normalized each possessive pronoun (base pronoun + \zh{的}) to their base singular or plural form. 
The more colloquial inclusive first-person plural \zh{咱们} (`we', which explicitly includes the interlocutor), was also present in the dataset. 
The distribution of true pronouns is skewed towards the first-person singular (37.6\%), see \Cref{fig:pronouns_distribution}; in contrast, there were no formal second-person plurals (\zh{您们}) in the dataset, which tend to be rare. This yielded 12 final pronouns.

\paragraph{Validating \Cref{thm:classification}.}
For each speaker utterance, we extract its frozen LLM embedding $\g : \setU \to \mathbb R^d; \insU \mapsto \insZ$. Then, we learn a map $\f : \mathbb R^d \to \Delta^{11}$ from the LLM embedding to a probability distribution over the 12 possible pronouns. The decoder $\f$ is an MLP of up to two layers trained to minimize the cross-entropy loss between the ground truth pronoun and the inferred one. At test time, the inferred pronoun is taken to be the most likely class under $\f$'s probability distribution. We include a strong \textbf{baseline} that directly selects the pronoun in the sentence if expressed and guesses the most frequent one \zh{我} (`I', first-person) if dropped. Because only $20.3\%$ of the dataset showed a dropped pronoun (i.e., $79.7\%$ of the pronouns are explicit), we expect baseline performance to achieve $\approx80\%$ accuracy. For each model, we estimate \Cref{thm:classification}'s LHS by taking the empirical test accuracy, and verifying that it is lower than the theoretical upper bound.

 To compute the theoretical upper bound (RHS of \Cref{thm:classification}), we estimate $\ent(\rvM)$ and $\MI(\rvM;\rvU)$ on the discrete meanings and utterances. Finally, like in previous tasks, we numerically solve for the largest $p_e$ that satisfies the bound and ensure that this $p_e$ is greater than the empirical test accuracies.

\begin{figure}
\centering 
\includegraphics[width=\linewidth]{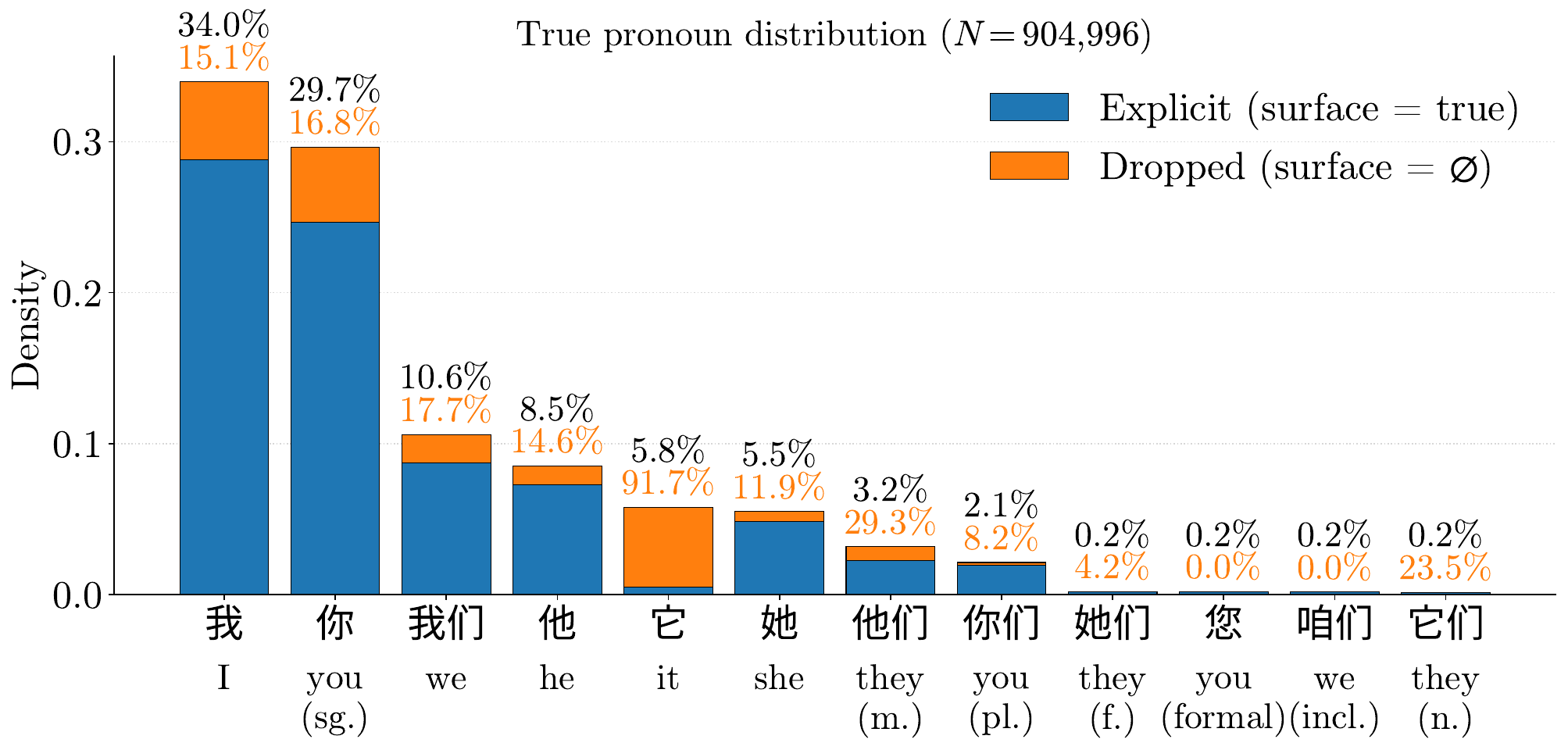}
\caption{\textbf{Mandarin pronouns distribution preprocessed from \citet{Wang2018TranslatingPL}.} Percentage annotations (black) are each pronoun's percentage of the total. The orange percentage annotations are the percent of each pronoun that is dropped (corresponding to the orange segment in the bars). For example, \zh{我} (`I') is 34\% of all true pronouns; it is dropped 15.1\% of the time.}
\label{fig:pronouns_distribution}
\end{figure}



\subsection{Color Naming \citep{monroe-etal-2017-colors}}

\paragraph{Meaning, context, and utterance distribution.} We filter the dataset for communicative successes (see \Cref{app:cleaning}), such that our valid language requirement $\ent(\rvM \mid \rvU, \rvC)=0$ is met. This cleaned version contains $N=38975$ rounds of the task from different speaker-listener pairs. Each round contains the RGB codes for the three color chips (\emph{context}), the target chip (\emph{intended meaning}), speaker message in natural language (\emph{utterance}), and listener guess (\emph{communicative success}). 

We preprocess the color chips by converting RGB to the three-dimensional CIELAB space, commonly used to model human color perception  \citep{gualdoni-boleda-2024-objects} where Euclidean distances correlate to human perceptual distances; we use this CIELAB space to proxy the \emph{meaning space} $\setM$, with the metric $d$ given by the Euclidean distance. Then, each color chip, i.e., each meaning $\insM$, is represented as a vector in $\mathbb R^3$, which we further normalized to $[0,1]^3$. Note that meanings here live in a bounded metric space; therefore, the quantized $\rvJ$ will have finite entropy as required by \Cref{coro:regression}.

\paragraph{Validating \Cref{coro:regression}.} For regression, the LHS of \Cref{coro:regression} is the empirical fraction of meanings that were predicted within-$\epsilon$. $\epsilon$ is a tolerance parameter set by the practitioner; here, we test values of $\epsilon$ ranging in $[0.01, 1]$. Similar to the zero-pronoun resolution task, we compute the empirical $\epsilon$-accuracy by using LLM representations of the speaker utterance to infer the target chip color. After analogously extracting the LLM  representations of the utterances, we learn an MLP $\f : \mathbb R^d \to \mathbb R^3; \insZ \mapsto \hat \insM$ from the LLM activations to meaning space. The map $\f$ is then trained to minimize MSE loss between the ground truth meaning $\insM$ and the inferred meaning $\hat \insM = \f (\g (\insU_n))$.
Finally, given error tolerance $\epsilon$, we report the fraction of test samples predicted within-$\epsilon$ of the target.

In contrast to the zero-pronoun resolution task, the set of possible referring expressions is unconstrained in the color naming task. Because the number of possible referring expressions can grow combinatorially with respect to utterance length, the dataset necessarily \emph{undersamples} the utterance space; this issue means that we should not directly estimate $\MI(\rvJ, \rvU)$ from raw utterances, as we did in the previous section. Instead, we set up an auxiliary classification task that predicts $\rvJ$ from an injective (thus invertible) map $h: \setU \to \Delta^{N(\epsilon) - 1}$ that maps $\rvU$ losslessly to $\ent(\rvU)$, a probability distribution over $\rvJ$. Due to the invertibility of $h$, $\MI(\rvJ; \ent(\rvU)) = \MI(\rvJ; \rvU)$, and we can directly plug in $\MI(\rvJ; \ent(\rvU))$ back into the RHS of \Cref{coro:regression}.

We rely on \citeposs{nikolaou2026language} result that LLM feature maps are lossless with respect to the input. We further add a classification head on top of the LLM feature vector to form $h$; see \Cref{app:training} for details. On the concatenated train and test sets, we then estimate $$\MI(\rvJ; \ent(\rvU)) = \ent(\rvJ) - \ent(\rvJ \mid \ent(\rvU)),$$ where $\rvJ$ is directly estimated from the raw data and $\ent(\rvJ \mid \ent(\rvU))$ by taking the average cross-entropy loss of the classifier $h$.

\section{Training details}
\label{app:training}

\paragraph{MLP hyperparameter search.}
For both \textbf{discrete} and \textbf{continuous} cases, we implement the decoder as an embedding matrix followed by a two-layer MLP, with fixed hidden dimension throughout the layers.
The MLP was trained on the following hyperparameter grid:

\begin{itemize}
    \item Learning rate $\in\{10^{-4}, 10^{-3}, 10^{-2}\}$
    \item Hidden width $\in\{64, 128, 256\}$
    \item Weight decay $\in\{0, 10^{-4}, 10^{-3}\}$.
\end{itemize}

In the \textbf{discrete} case, we trained the MLP for 20 epochs with cross-entropy loss, batch size $256$, and Adam. In the \textbf{continuous} case, for each $(\epsilon, p)$ we train the MLP
$f_\theta:\{0, \ldots, |U|-1\} \to \mathbb R^2$ to minimize the MSE loss
$\mathbb E[\|f_\theta(U) - M\|_2^2]$ with Adam, for max 30 epochs. The train/test split
is a fixed 80/20 partition of the sampled meanings. 

\paragraph{LLM hyperparameter search.} 
For every (model, layer) pair, we trained nine MLP variants that vary one
hyperparameter at a time from a shared reference point
(one hidden layer of 256 units, learning rate $10^{-3}$, weight decay
$10^{-4}$):
\begin{itemize}
\item Learning rate $\in\{10^{-4}, 10^{-3}, 10^{-2}\}$
\item Hidden width $\in\{128, 256, 512\}$
\item Weight decay $\in\{0, 10^{-4}, 10^{-3}\}$
\item MLP depth $\in\{$linear, 1-hidden, 2-hidden$\}$
\end{itemize}
For every configuration the probe is trained independently per layer. We used the Adam optimizer with early stopping
(patience 8, up to 100 epochs), and mini-batches of 256.

\paragraph{Meaning inference pipeline for natural language experiments.} 
We learn an MLP decoder $\f: \setZ \to \setM$ that maps model activations back to the original meaning space, i.e., inferring the meaning $\widehat{\rvM}$ of the utterance $\rvU$. In each experiment, we split the data, consisting of (utterance, meaning) pairs, into 80/20 train and test splits. We realize the decoder $\f$ by a lightweight MLP (up to 2 hidden layers) that is trained in a supervised setting for up to 100 epochs. As \Cref{thm:classification,coro:regression} are statements about the \emph{best-case} performance, for each LLM, in training the decoder $\f$, we only report the best test meaning inference performance over all LLM layers, training steps, and hyperparameters. All hyperparameter settings are given in \Cref{app:training}. 

\paragraph{Zero-pronoun resolution.} We drew a $50{,}000$-sentence subsample at random from the total dataset and split it 80/20 into train and test sets (the partition was identical for every model).
The original utterances contained spaces between characters, which were removed before tokenization in a preprocessing step.

For each frozen model, we extracted the last-token hidden state at every layer and $z$-score-normalized the activations per layer over the training set.
We then trained MLP probes with cross-entropy loss on the true pronoun class, sweeping the above hyperparameters, and evaluated test accuracy on
the held-out set.

\newpage 
\FloatBarrier 
\section{Additional artificial language results (continuous)}
\label{app:artificial_extra_results}
\begin{figure}[h]
    \centering
    \includegraphics[width=0.95\linewidth]{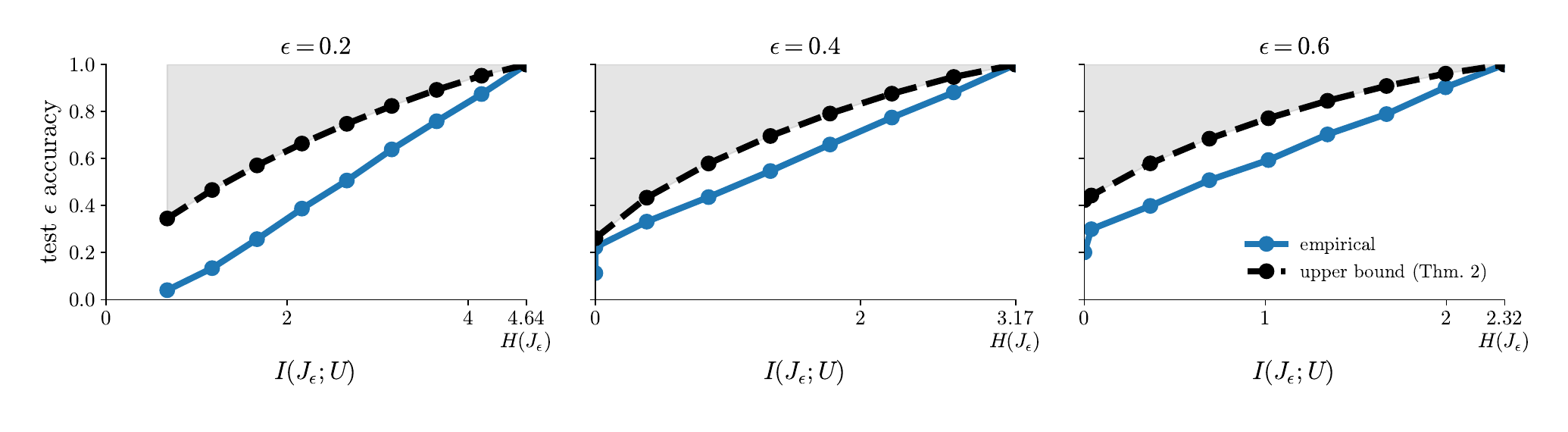}
    \caption{\textbf{Artificial languages over continuous meanings; other values of $\epsilon$}. We show that the \Cref{coro:regression} bound (black line) holds for various settings of $\epsilon\in\{0.2, 0.4, 0.6\}$.}
    \label{fig:other_eps_artificial}
\end{figure}

\FloatBarrier 

\newpage 
\FloatBarrier 
\section{Additional color-naming results}
\label{app:color_results}

\subsection{Other values of $\epsilon$}
\Cref{fig:other_eps_colors} shows the \Cref{coro:regression} in practice for other values of $\epsilon \in \{0.75, 0.5, 0.3, 0.25, 0.15, 0.125, 0.1, 0.05, 0.01\}$. 

\begin{figure}[h]
    \centering
    \includegraphics[width=0.75\linewidth]{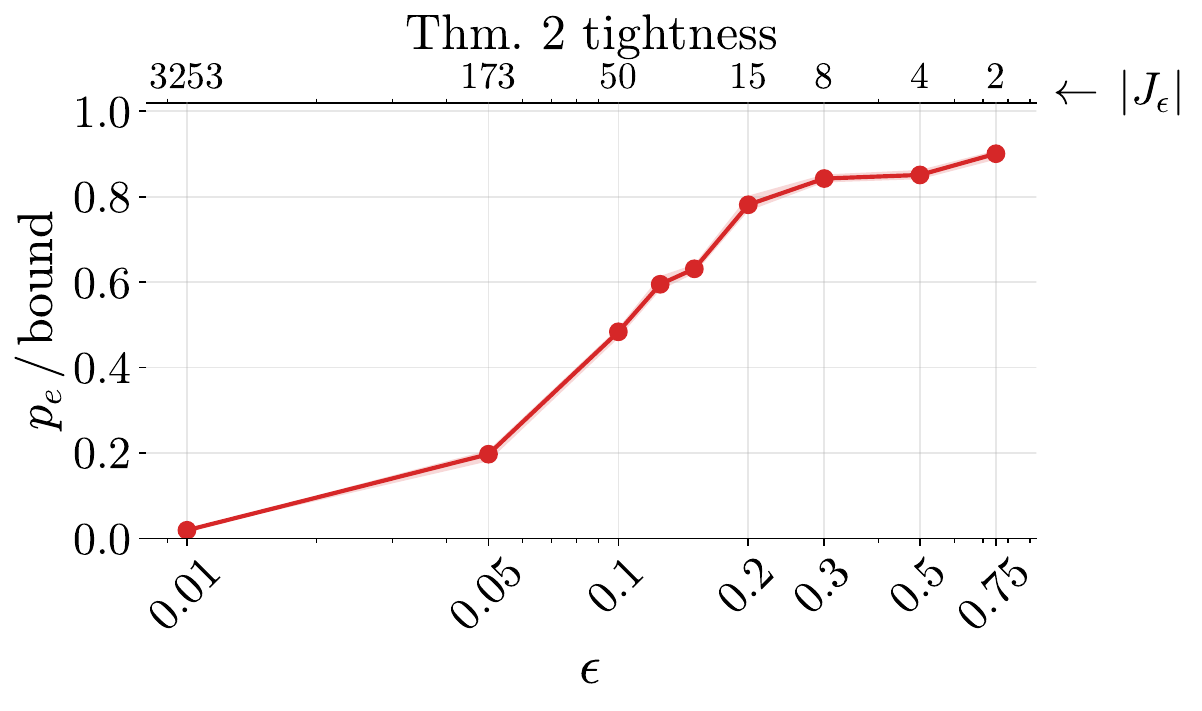}
    \caption{\textbf{\Cref{coro:regression} tightness.} As the parameter $\epsilon$ (x-axis), which controls the granularity of the quantization, increases, the tightness of the bound in \Cref{coro:regression} (y-axis) improves. The number of bins $|\rvJ|$ in the quantization implied by $\epsilon$ is shown at the top x-axis; at the right extreme case $\epsilon=0.75$, the $\epsilon$-accurate regression task reduces to binary classification ($|\rvJ|=2$). At the left extreme, the task is $|\rvJ|=3253$-way classification. We observe that \Cref{coro:regression} becomes ``useful'' in an intermediate regime, $\epsilon \approx 0.1$.}
    \label{fig:tightness_eps}
\end{figure}

\begin{figure}
    \centering
    \includegraphics[width=\linewidth]{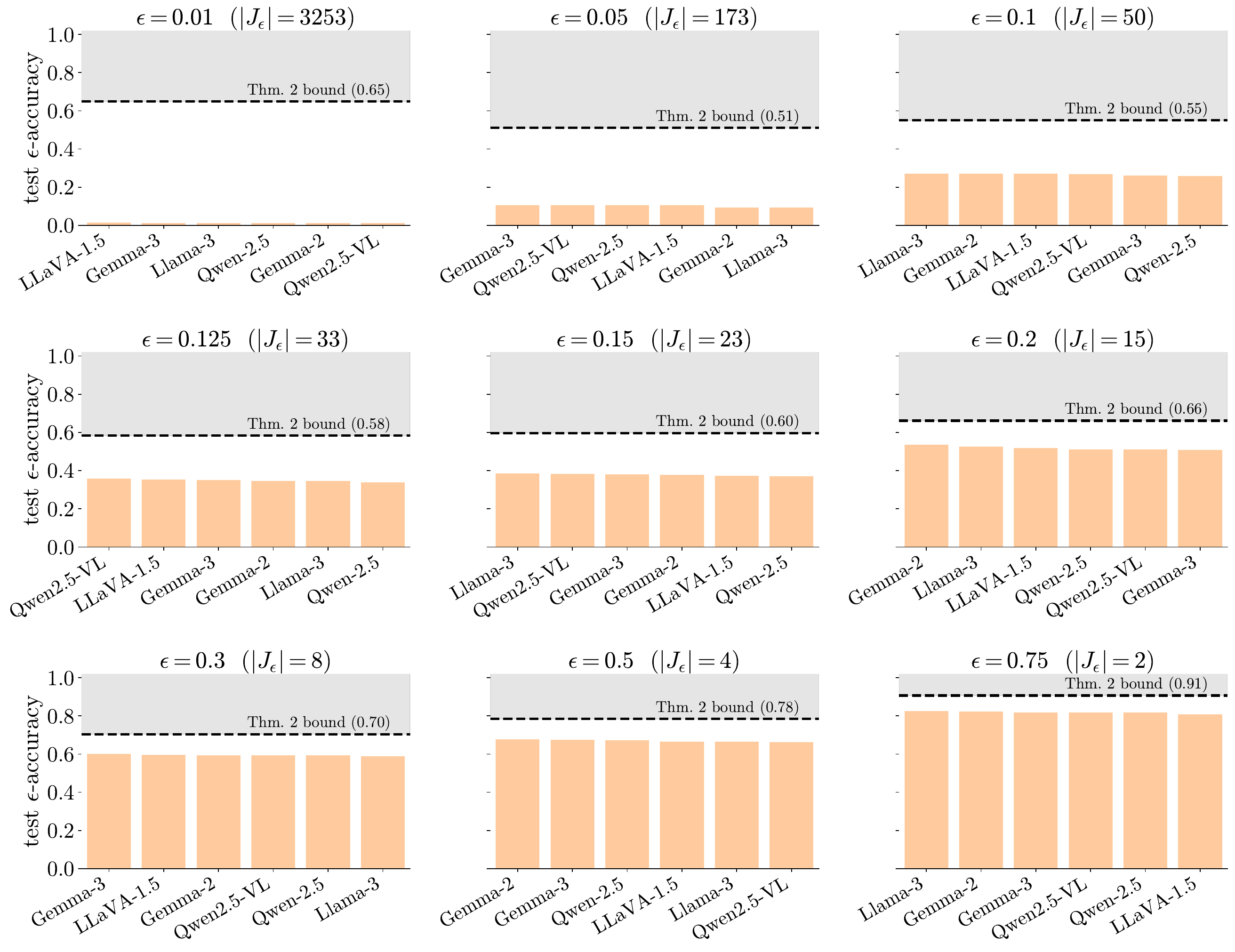}
    \caption{\textbf{Color naming, \Cref{coro:regression} for all values of $\epsilon$.} Analogue to \Cref{fig:panel} (left) for each value of $\epsilon$. Bounds hold for all values of $\epsilon$ and typically tighten with larger $\epsilon$; for reference, the number of bins $|\rvJ|$ in the quantized meaning variable $\rvJ$ is also shown. }
    \label{fig:other_eps_colors}
\end{figure}

\subsection{Tightness of \Cref{coro:regression}}

\Cref{coro:regression} holds empirically for each $\epsilon$ we tested. However, in \Cref{fig:tightness_eps} one can notice the bound is looser with smaller $\epsilon$, and especially for values of $\epsilon$ less than $0.1$ ($\epsilon=0.05$, $0.01$). We identify the source of this phenomenon as an increase in $\MI(\rvJ; \rvU)$: qualitatively, as $\epsilon$ decreases, the number of bins in the quantization increases, and a single utterance form corresponds to more bins---this results in inflated $\MI(\rvJ; \rvU)$.

Qualitatively, \emph{coarser} quantization (larger $\epsilon$) yields a better bound, while \emph{finer} quantization (smaller $\epsilon$) yields a looser bound. The loosening of the bound with $\epsilon$ can be explained by a straightforward reason: utterance forms tend to \emph{coarsely} tile the color space, and so the bounds reflect the level of inherent ambiguity in (utterance, meaning ``bin'') pairs in the quantized meaning space. To provide an intuition: utterances like ``purple'' may refer to an area of colors. If we decrease $\epsilon$ to smaller than the radius of this area, then ``purple'' is split into many bins. At test time, the classifier $h: \setU \to \rvJ$ deterministically maps ``purple'' to a single one of those bins. Because $h$ is deterministically mapping ``purple'' ($\rvU$) to the same bin ($\rvJ$), we have an increase in the mutual information $\MI(\rvJ; \rvU)$ between utterances and bin labels relative to $\ent(\rvJ)$. This results from how language coarsely parcels the meaning space.

\newpage 
\FloatBarrier 
\section{Proof of \Cref{thm:classification}}
\label{app:more_theory}

\setcounter{theorem}{0}
\begin{theorem}[Classification error]
    Let $\g \colon \setU \to \setZ$ be a map from utterances ($\setU$) to representations ($\setZ$). Abbreviate the probability that $\f$ is correct as $p_e := \Prob[\f(\g (\rvU)) = \rvM]$. Then,
        \begin{align}
        \sup_{\f \in \setM^{\setZ}} p_e &\leq \frac{\MI(\rvM ; \rvU)+\ent_2(p_e)}{\ent(\rvM)} 
        =
        \frac{\MI(\rvM ; \rvU,\rvC) - \overbrace{\MI(\rvM;\rvC\mid \rvU)}^{\text{meaning from context only}}+\ent_2(p_e)}{\ent(\rvM)}. \nonumber 
    \end{align}
\end{theorem}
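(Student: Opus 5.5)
The plan is a Fano-type argument combined with the data-processing inequality along the Markov chain $\rvM \to \rvU \to \rvZ = \g(\rvU) \to \widehat{\rvM} = \f(\rvZ)$. The second expression in the statement follows from the chain rule for mutual information. The key gap is the $(1-p_e)\ent(\rvM)$ step in the second paragraph, which I do not expect to hold in general.

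\textbf{Reduction to one fixed decoder.} Fix an arbitrary decoder $\f \in \setM^{\setZ}$; the bound on the supremum then follows once it is proved for every fixed $\f$. Since $\widehat{\rvM}$ is a deterministic function of $\rvU$, data processing gives $\MI(\rvM;\widehat{\rvM}) \leq \MI(\rvM;\rvZ) \leq \MI(\rvM;\rvU)$. The context $\rvC$ plays no role here, because the listener modelled by $\f\circ\g$ never sees it. Hence it suffices to show
\[
p_e\,\ent(\rvM) \leq \MI(\rvM;\widehat{\rvM}) + \ent_2(p_e),
\]
equivalently $\ent(\rvM\mid\widehat{\rvM}) \leq \ent_2(p_e) + (1-p_e)\,\ent(\rvM)$. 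Finiteness of $\ent(\rvM)$ makes all of these quantities finite, even for countable $\setM$.

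\textbf{The Fano-type step.} Introduce the success indicator $E = \1[\widehat{\rvM} = \rvM]$ and expand $\ent(\rvM, E \mid \widehat{\rvM})$ in two ways. This gives
\[
\ent(\rvM\mid\widehat{\rvM}) \leq \ent(E) + p_e\,\ent(\rvM\mid\widehat{\rvM},E=1) + (1-p_e)\,\ent(\rvM\mid\widehat{\rvM},E=0).
\]
Here $\ent(E)=\ent_2(p_e)$, and the $E=1$ term vanishes because $\rvM=\widehat{\rvM}$ on that event. The remaining task is to bound $\ent(\rvM\mid\widehat{\rvM},E=0)$ by $\ent(\rvM)$. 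This is the step I expect to be the main obstacle.

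\textbf{Why the key step can fail, and what I would do.} Conditioning on the error event $E=0$ can raise the entropy of $\rvM$ above $\ent(\rvM)$, and then the bound fails. For example, take $\rvM$ with one atom of mass $1-\delta$ and a rare tail carrying entropy $K$, and let $\f$ always output the mode. Then $\MI=0$ and $p_e = 1-\delta$, yet $p_e\,\ent(\rvM) \approx \ent_2(\delta) + (1-\delta)\delta K$, which exceeds $\ent_2(p_e)$ once $K$ is large. So the first thing I would do is check whether an extra assumption is needed. One option is to assume $\ent(\rvM\mid E=0)\leq\ent(\rvM)$, which holds for uniform $\rvM$, the case used in the artificial-language experiments. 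Another is to replace $\ent(\rvM)$ by $\log(|\setM|-1)$ as in the classical Fano inequality $\ent(\rvM\mid\widehat{\rvM})\leq \ent_2(p_e)+(1-p_e)\log(|\setM|-1)$. Under such an assumption the display above closes: rearrange and divide by $\ent(\rvM)$.

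\textbf{Second expression.} Finally, rewrite the numerator using the chain rule $\MI(\rvM;\rvU,\rvC)=\MI(\rvM;\rvU)+\MI(\rvM;\rvC\mid\rvU)$, which is \Cref{eq:mi_breakdown}. This gives the stated form with overall decodability minus context-only information.
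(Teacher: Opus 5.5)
Your approach is the paper's approach, and the step you flag is exactly where the paper's proof breaks. The appendix proof asserts, as ``Fano's inequality,'' that $\ent(\rvM\mid\rvZ)\le\ent_2(p_i)+p_i\,\ent(\rvM)$ with $p_i=1-p_e$. It then uses $\ent(\rvM\mid\rvZ)\ge\ent(\rvM\mid\rvU)$ and the chain rule. The standard Fano argument only gives $p_i\,\ent(\rvM\mid\widehat{\rvM},E=0)$ in place of $p_i\,\ent(\rvM)$, or $p_i\log(|\setM|-1)$ after bounding that term. So the paper's form needs precisely the bound you could not justify, and your counterexample shows the bound is false.

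The counterexample refutes the statement itself, not just your route. It also refutes the weaker conclusion $p_e\le(\MI(\rvM;\rvU)+1)/\ent(\rvM)$ that the paper's proof actually derives. A minimal finite instance: let $\rvM=0$ with probability $1/2$ and uniform on $\{1,\dots,16\}$ otherwise, and let $\rvU$ be constant. Choosing $\rvC=\rvM$ even satisfies $\ent(\rvM\mid\rvU,\rvC)=0$. Then $\ent(\rvM)=3$ bits and $\MI(\rvM;\rvU)=0$. The best decoder outputs $0$, so $\sup_{\f}p_e=1/2$. The right-hand side is at most $1/3$ under any reading of the $p_e$ appearing there, since $\ent_2\le 1$. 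The culprit is non-uniformity of $\rvM$, not infinite support. So the paper's closing remark that $\ent(\rvM)$ is a valid and tighter replacement for $\log(|\setM|-1)$ is mistaken, and no argument can establish the statement as written.

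Your suggested fixes are sound. Assuming $\ent(\rvM\mid E=0)\le\ent(\rvM)$, which holds for uniform $\rvM$, closes your display. Classical Fano is valid but gives $p_e\le 1-(\ent(\rvM\mid\rvU)-\ent_2(p_e))/\log(|\setM|-1)$, which is not of the stated shape.

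A repair that keeps the stated shape for any countable $\setM$, with no extra hypothesis, is to replace $\ent(\rvM)$ by the min-entropy $\ent_\infty(\rvM)=\log(1/q^\ast)$, where $q^\ast=\max_m\Prob(\rvM=m)$. Apply data processing for KL divergence to the indicator of $\{\rvM=\widehat{\rvM}\}$, comparing the joint law of $(\rvM,\widehat{\rvM})$ with the product of its marginals. Under the product, the event has probability $q=\sum_m\Prob(\rvM=m)\Prob(\widehat{\rvM}=m)\le q^\ast$. Hence $\MI(\rvM;\widehat{\rvM})\ge p_e\log(1/q)+(1-p_e)\log\bigl(1/(1-q)\bigr)-\ent_2(p_e)\ge p_e\,\ent_\infty(\rvM)-\ent_2(p_e)$. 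Together with your data-processing step, this yields $p_e\le(\MI(\rvM;\rvU)+\ent_2(p_e))/\ent_\infty(\rvM)$. That bound coincides with the paper's exactly when $\rvM$ is uniform, as in the artificial-language experiments, and it splits via the chain rule as in your last paragraph.
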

\begin{proof}
    Consider the Markov chain $\rvM \to \rvZ \to \f (\rvZ)$, where $\f (\rvZ)$ is the decoder's prediction of the original speaker meaning from a representation of text, and $p_e$ is the probability of prediction success. Let $p_i = 1-p_e$ be the probability the prediction is \emph{incorrect}. By Fano's inequality, 
    \begin{align}
        \ent (\rvM\mid \rvZ) &\leq \ent_2(p_i) + p_i \ent(\rvM) \nonumber \\
        &\leq 1 + p_i\ent(\rvM). \nonumber 
    \end{align}
    Rearranging, we have
    \begin{align}
        p_i &\geq \frac{\ent(\rvM \mid \rvZ)-1}{\ent(\rvM)} \nonumber \\
        &\geq \frac{\ent(\rvM \mid \rvU)-1}{\ent(\rvM)} \qquad \textcolor{gray}{(\ent(M\mid Z) \geq \ent(M\mid U))} \nonumber \\
        &= \frac{\ent(\rvM \mid \rvU,\rvC) + \MI(\rvM;\rvC\mid \rvU)-1}{\ent(\rvM)} \quad \textcolor{gray}{\textrm{(\Cref{eq:mi_breakdown})}} \nonumber
    \end{align}
    \Cref{thm:classification} follows from substituting $p_i$ with $1-p_e$ in the above derivationand noting that $\ent(\rvM) = \MI(\rvM;\rvU) + \ent(\rvM\mid \rvU)$. Due to the finite entropy assumption $\ent(\rvM)<\infty$, the final bound is non-vacuous.
\end{proof}

We remark here that Fano's inequality classically contains a $\log(|\setM|-1)$ term in the denominator. The $\ent(\rvM)$ version is better here for three reasons. First, we can make the weaker assumption of finite entropy $\ent(\rvM)<\infty$, as opposed to bounded support $|\setM|<\infty$. Second, using $\ent(\rvM)$ yields a numerically tighter bound. Third, practically, while computing $\log(|\setM|-1)$ in the denominator of classic Fano's inequality is simpler than estimating $\ent(\rvM)$, note that classic Fano's inequality still involves computing a conditional entropy term $\ent(\rvM \mid \rvU)$ in the numerator; thus, computing the additional $\ent(\rvM)$ in our case does not add much more overhead.

\section{Proof of \Cref{coro:regression}}
\label{app:more_theory2}
\setcounter{theorem}{1}
\begin{theorem}[$\epsilon$-accurate regression]
    Let $g: \setU \to \setZ$ be a map from utterances ($\setU$) to representations ($\setZ$). For $\epsilon > 0$, assume $\ent(\rvJ) < \infty$. Then,
    \begin{align}
        \sup_{\f \in \setM^{\setZ}} \Prob[\metric(\f(\g (\rvU)) , \rvM) \leq \epsilon ] &\leq \frac{\MI(\rvJ ; \rvU)+\ent_2(B)}{\ent(\rvJ)} 
        =
        \frac{\MI(\rvJ ; \rvU,\rvC) - \overbrace{\MI(\rvJ;\rvC\mid \rvU)}^{\text{meaning from context only}}+\ent_2(B)}{\ent(\rvJ)}. \nonumber 
    \end{align}
\end{theorem}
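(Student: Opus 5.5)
We reduce the regression statement to the classification statement, Theorem~\ref{thm:classification}, applied to the quantized meaning $\rvJ = q(\rvM)$, which takes values in the packing $\{\insM_n\}_{n=1}^{N(\epsilon)}$. Fix any decoder $\f \in \setM^{\setZ}$ and write $\widehat\rvM = \f(\g(\rvU))$. Define the induced packing-point estimator $\widehat\rvJ \defeq q(\widehat\rvM)$, which sends the decoder output to its nearest packing point. This is a deterministic function of $\g(\rvU)$, hence of $\rvU$. Let $A$ be the event $\metric(\widehat\rvM,\rvM)\le\epsilon$ and $B$ the event $\widehat\rvJ=\rvJ$. The plan has three steps, carried out in this order. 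First, show $A\subseteq B$, so that $\Prob(A)\le\Prob(B)$. Second, bound $\Prob(B)$ with the Fano argument used in the proof of Theorem~\ref{thm:classification}, with $\rvJ$ in place of $\rvM$. Third, rewrite $\MI(\rvJ;\rvU)$ using the chain rule, as in \Cref{eq:mi_breakdown}. Taking the supremum over $\f$ then finishes the proof.

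For the second step, the chain $\rvJ \to \rvU \to \g(\rvU) \to \widehat\rvJ$ and the finite-entropy assumption $\ent(\rvJ)<\infty$ let us apply Fano's inequality in its entropy form:
\begin{equation*}
\ent(\rvJ\mid\rvU)\le\ent(\rvJ\mid\g(\rvU))\le\ent_2(B)+(1-\Prob(B))\,\ent(\rvJ).
\end{equation*}
Here $\ent_2(B)$ is the binary entropy of $\Prob(B)$. The first inequality is data processing. Rearranging with $\ent(\rvJ)=\MI(\rvJ;\rvU)+\ent(\rvJ\mid\rvU)$ gives $\Prob(B)\le(\MI(\rvJ;\rvU)+\ent_2(B))/\ent(\rvJ)$. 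This requires $\ent(\rvJ)>0$, which holds when $N(\epsilon)>1$ and $\rvJ$ is nondegenerate; if $\rvJ$ is a.s.\ constant the bound should be read as trivial. The third step is the identity $\MI(\rvJ;\rvU,\rvC)=\MI(\rvJ;\rvU)+\MI(\rvJ;\rvC\mid\rvU)$, which gives the second form of the bound. The degenerate case $N(\epsilon)=1$ is handled directly. A maximal $2\epsilon$-packing with a single point $\insM_1$ forces every point to lie within $2\epsilon$ of $\insM_1$. To get $\Prob(A)=1$ for \emph{all} $\f$, as claimed, one would need $\metric(\insM,\insM')\le\epsilon$ for all pairs, which need not follow. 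So in this case I would either reinterpret the statement (for instance, as achievable by choosing $\f\equiv\insM_1$, up to a factor in $\epsilon$) or note that the claim needs diameter $\le\epsilon$.

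The main obstacle is the first step, $A\Rightarrow B$. If $\metric(\widehat\rvM,\rvM)\le\epsilon$, the nearest packing point to $\widehat\rvM$ must equal the nearest packing point to $\rvM$. With the packing separated by $2\epsilon$, the triangle inequality gives this only when $\rvM$ lies within a small radius of its packing point. In general $\rvM$ may sit up to $2\epsilon$ from $\rvJ$, because maximality only guarantees that the packing is a $2\epsilon$-cover, so $\widehat\rvM$ could fall in a neighbouring Voronoi cell. I would first try to repair this by redefining $B$ as ``$\widehat\rvJ$ equals $\rvJ$ or a packing point adjacent to it'', or by choosing the packing scale relative to $\epsilon$ (for example a $4\epsilon$-separation, so that $\epsilon$-balls around distinct cell representatives remain disjoint). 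An alternative is to define the estimator as the packing point within $\epsilon$ of $\widehat\rvM$ when one exists. If none of these closes the gap, I would state the containment under the assumption that meanings concentrate within $\epsilon/2$ of the packing, as in the artificial-language construction with $\delta=0.1\le\epsilon/2$. Everything after this step is the standard Fano computation from Theorem~\ref{thm:classification}.
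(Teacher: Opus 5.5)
Your outline is the paper's own proof: nearest-point quantizer, $A\subseteq B$, Fano for $\rvJ$, data processing, chain rule. Your diagnosis of the first step is correct. No repair can close that gap, though, because the statement as written is false.

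\textbf{The containment $A\subseteq B$.} The paper asserts $A\Rightarrow B$ ``by definition of $2\epsilon$-packing'' for the same estimator $\widehat\rvJ=q(\widehat\rvM)$. With $r=\metric(\rvM,q(\rvM))$, the triangle inequality only gives $\metric(\widehat\rvM,q(\rvM))\le\epsilon+r$ and $\metric(\widehat\rvM,\insM_k)\ge\epsilon-r$ for the other packing points. So $q(\widehat\rvM)=q(\rvM)$ is forced only when $r=0$, and then only up to ties.

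Here is a counterexample to the theorem itself. Let $\setM$ be the closed Euclidean disk of radius $2\epsilon/\sqrt3$ about a point $c$. The vertices of an inscribed equilateral triangle have side $2\epsilon$. They form a maximal $2\epsilon$-packing, since every point of $\setM$ lies within $\sqrt{8/3}\,\epsilon<2\epsilon$ of some vertex. Their Voronoi cells are three $120^\circ$ sectors. Let $\rvM$ be uniform on $\setM$ and $\rvU$ independent of $\rvM$. Then $\rvJ$ is uniform on three values and $\MI(\rvJ;\rvU)=0$, so the right-hand side is at most $1/\log_2 3\approx 0.63$. Yet the constant decoder $\f\equiv c$ achieves $\Prob(A)=3/4$.

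Your fallbacks, taken separately, fail on the same picture. Rescale to side $4\epsilon$ with $\rvM$ uniform on the $\epsilon$-ball about $c$, or keep side $2\epsilon$ and use three equiprobable atoms placed $\epsilon/2$ from the vertices toward $c$. In both cases $\f\equiv c$ gives $\Prob(A)=1$ while $\ent(\rvJ)=\log_2 3$. The ``packing point within $\epsilon$'' estimator breaks for the same reason. Enlarging $B$ to adjacent cells proves a different bound, with a $\log$(number of neighbours) penalty.

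What the containment actually needs is packing separation greater than $2\epsilon+2r$, with $r\ge\metric(\rvM,q(\rvM))$ almost surely. That means both of your modifications at once. For a $2\epsilon$-packing it forces $\rvM$ onto the packing points. The paper's artificial languages ($\epsilon=0.2$, $\delta=0.1$) would need separation above $0.6$, not $0.4$. Your objection to the $N(\epsilon)=1$ clause in the main-text version is also correct.

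\textbf{The Fano step.} The second gap is in the step you treat as standard, and you did not flag it. The inequality $\ent(\rvJ\mid\g(\rvU))\le\ent_2(B)+(1-\Prob(B))\,\ent(\rvJ)$ is not Fano's inequality, and it is false in general. The paper's proofs of both theorems rely on it too. Fano's inequality has $\log(N(\epsilon)-1)$ where this version has $\ent(\rvJ)$. The substitution would require $\ent(\rvJ\mid\widehat\rvJ,\text{error})\le\ent(\rvJ)$, but conditioning on an event can increase entropy.

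For an example, take $2^{10}+1$ meanings pairwise $3\epsilon$ apart, so that $\rvJ=\rvM$ and $A\subseteq B$ really does hold. Put mass $0.9$ on $\insM_0$, spread the rest uniformly, and let $\rvU$ be independent of $\rvM$. Then $\ent(\rvJ)=\ent_2(0.1)+0.1\cdot 10\approx 1.47$ bits. The decoder $\f\equiv\insM_0$ attains $\Prob(A)=0.9$. The right-hand side equals $\ent_2(0.9)/\ent(\rvJ)\approx 0.32$, and even with $\ent_2\le 1$ it is at most $0.68$. The entropy form does hold whenever $\ent(\rvJ)\ge\log(N(\epsilon)-1)$, e.g.\ for $\rvJ$ uniform on finitely many points.

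\textbf{What a correct theorem needs.} There are two routes. One adds an assumption forcing $A\subseteq B$ and uses the classical denominator $\log(N(\epsilon)-1)$. The other drops the quantizer and applies data processing for KL divergence to the indicator of $\metric(\widehat\rvM,\rvM)\le\epsilon$, comparing $\Prob_{\rvM,\rvU}$ with the product of its marginals. That yields $\Prob(A)\le\bigl(\MI(\rvM;\rvU)+\log 2\bigr)/\log\bigl(1/\sup_{\insM}\Prob[\metric(\insM,\rvM)\le\epsilon]\bigr)$. Neither Voronoi boundaries nor skew in $\rvJ$ can inflate this denominator.
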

\begin{proof}
    Let $\widehat{\rvM} = \f (\rvZ)$. Let $\{m_n\}_{n=1}^{N(\epsilon)}$ be the maximal $2\epsilon$-packing of $\setM$, meaning $\metric(m_i, m_j) \geq 2\epsilon$ for all $i\neq j \in \{1\dots N(\epsilon)\}$. Recall that $\rvJ=q(M)$ is a discrete random variable over the $N(\epsilon)$ packing points $\{m_n\}_{n=1}^{N(\epsilon)}$, formed by mapping points $m\in\setM$ to their nearest packing point in $\{m_n\}_{n=1}^{N(\epsilon)}$. We construct an auxiliary classification problem where the goal is to estimate $\widehat \rvJ = \rvJ$.

    Let the event $A$ be the event that $\epsilon$-accurate recovery is satisfied; i.e., $A = \mathbb 1[\metric(\widehat \rvM, \rvM) \leq \epsilon]$. Let $B$ be the event that the auxiliary classification is successful, i.e., $B = \mathbb 1[\widehat \rvJ = \rvJ]$. 
    
    By definition of $2\epsilon$-packing, if $\metric(\widehat \rvM, \rvM) \leq \epsilon$, then setting $\widehat \rvJ$ as the closest packing point to $\f(\rvZ)$ under the metric $\metric$ yields $\widehat \rvJ = \rvJ$. In other words, the event $A \Rightarrow B$. By extension, 
    \begin{equation}
    \label{eq:events}
        \Prob(A) \leq \Prob(B).
    \end{equation}

    Note that the auxiliary classification problem, i.e., the setting of event $B$, reduces to the classification setting in \Cref{thm:classification}. We now upper-bound $\Prob(B)$ using Fano's inequality in the same way as in \Cref{thm:classification}.

    This yields
    \begin{align}
        \Prob(B)
        \leq 1-
        \frac{\ent(\rvJ\mid \rvZ)-\ent_2(B)}{\ent (\rvJ)}.
    \end{align}

    Since we assume $\ent(\rvJ)<\infty$, the bound is non-vacuous. 
    
    Because $\rvZ$ is a deterministic function of $\rvU$, $\ent(J \mid \rvZ) \geq \ent(J \mid \rvU).$ This yields the bound
    \begin{align}
        \Prob(B)
        \leq 1-
        \frac{\ent(\rvJ\mid \rvU)-\ent_2(B)}{\ent (\rvJ)},
    \end{align}

    which, alongside \Cref{eq:events}, yields
    \begin{align}
        \Prob(A)
        &\leq 1-
        \frac{\ent(\rvJ\mid \rvU)-\ent_2(B)}{\ent (\rvJ)}\\
        &= \frac{\MI(\rvJ; \rvU)+\ent_2(B)}{\ent (\rvJ)}
    \end{align}

    Finally, using
    \[
        \ent(\rvJ\mid \rvU)
        =
        \ent(\rvJ\mid \rvU,\rvC)
        +
        \MI(\rvJ;\rvC\mid \rvU)
    \]
    yields the final bound.
\end{proof}

\end{document}